\newtheorem*{thm}{Theorem}
\newtheorem*{lemma}{Lemma}
\begin{document}

\title[]{t-SNE, Forceful Colorings and Mean Field Limits}

\author[]{Yulan Zhang}
\address[]{Yale University, New Haven, CT 06511, USA} \email{yulan.zhang@yale.edu}

\author[]{Stefan Steinerberger}
\address[]{Department of Mathematics, University of Washington, Seattle, WA 98195, USA} \email{steinerb@uw.edu}

\keywords{dimensionality reduction, t-SNE, UMAP, ForceAtlas2, mean field.}
\subjclass[2020]{49N99, 62R07, 68R12, 82M99} 
\thanks{Y. Z. was partially supported by a grant of the Poorvu Family. S.S. was partially supported by the NSF (DMS-1763179) and the Alfred P. Sloan Foundation.}

\begin{abstract} t-SNE is one of the most commonly used force-based nonlinear dimensionality reduction methods. This paper has two contributions: the first is \textit{forceful colorings}, an
idea that is also applicable to other force-based methods (UMAP, ForceAtlas2,\dots). In every equilibrium, the attractive and repulsive forces acting on a particle cancel out:
however, both the size and the direction of the attractive (or repulsive) forces acting on a particle are related to its properties: the force vector can serve as an additional feature. Secondly, we analyze the case of t-SNE acting
on a single homogeneous cluster (modeled by affinities coming from the adjacency matrix of a random $k-$regular graph); we derive a mean-field model that leads to 
interesting questions in classical calculus of variations. The model predicts that, in the limit, the t-SNE embedding of a single perfectly homogeneous cluster is not a point but 
a thin annulus of diameter $\sim k^{-1/4} n^{-1/4}$. This is supported by numerical results. The mean field ansatz extends to other force-based dimensionality reduction methods.
\end{abstract}
\maketitle

\section{Introduction}
t-distributed Stochastic Neighborhood Embedding (t-SNE) is a well-known nonlinear dimensionality reduction technique with applications in many fields. It is frequently used to generate two- or three-dimensional visualizations of high dimensional datasets, often for the purpose of visualizing and identifying clusters. 
\vspace{-10pt}

\begin{center}
\begin{figure}[h!]
\begin{tikzpicture}
\node at (0,0) {\includegraphics[width=0.43\textwidth]{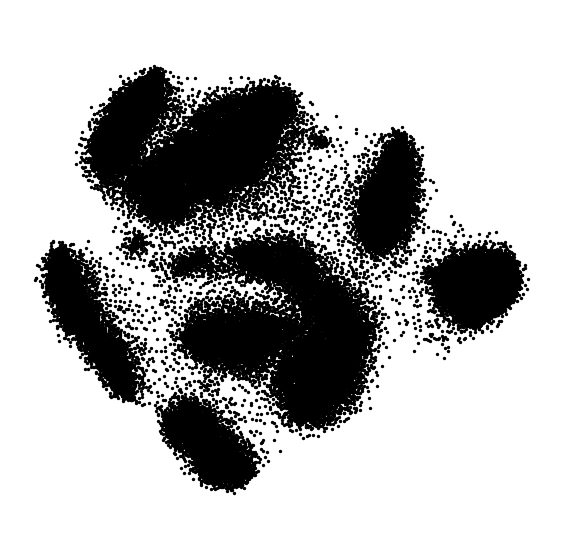}};
\node at (5,0) {\includegraphics[width=0.43\textwidth]{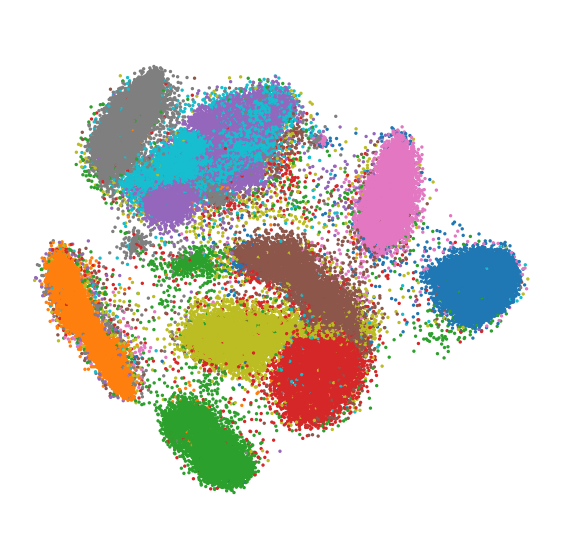}};
\end{tikzpicture}
\vspace{-10pt}
\caption{t-SNE embedding of MNIST (left) with ground truth coloring (right).}
\label{fig:mnist}
\end{figure}
\end{center}

We describe t-SNE as a \textit{force-based} method because it generates embeddings by balancing attractive and repulsive forces between data samples. These forces are determined by comparing the neighborhood structure of the input data to that of the output. Other well-known force-based methods include Laplacian eigenmaps \cite{bel, coif}, ForceAtlas2 \cite{jac}, LargeVis \cite{largevis}, and UMAP \cite{umap}. \\

\begin{center}
\begin{figure}[h!]
\begin{tikzpicture}
\node at (0,0) {\includegraphics[width=0.9\textwidth]{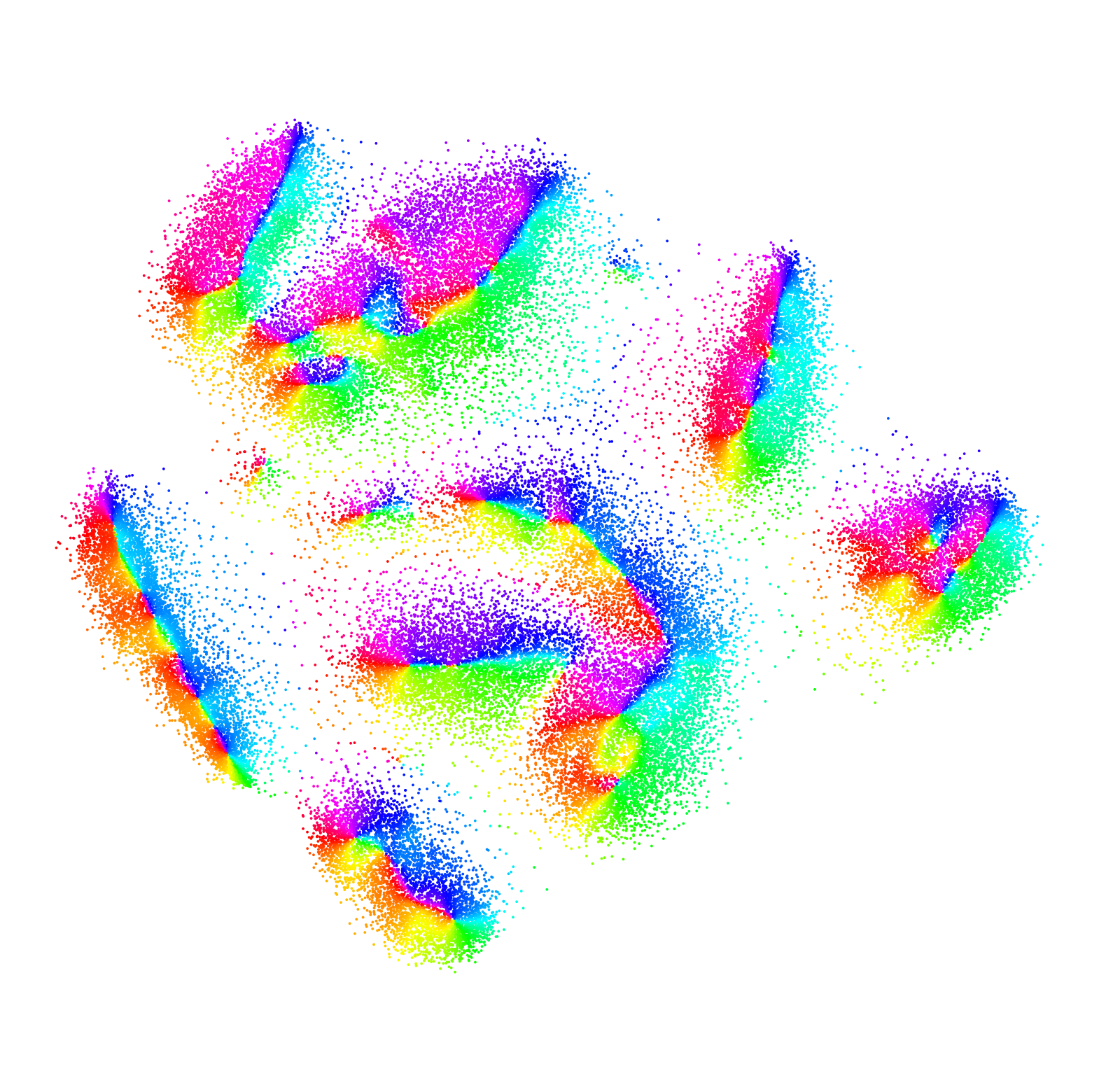}};
\node at (3.75,-3.2) {\includegraphics[width=0.2\textwidth]{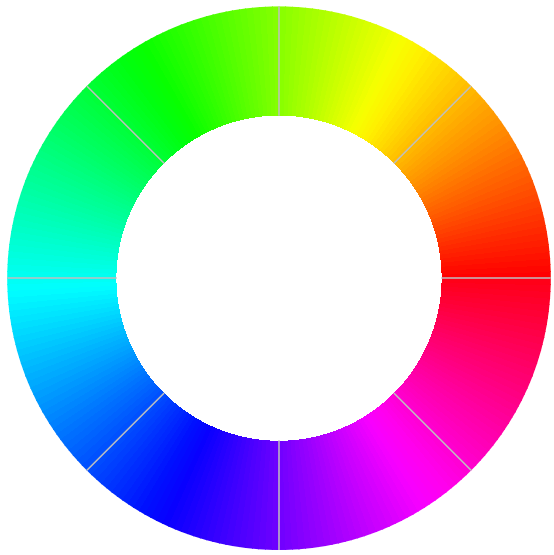}};
\end{tikzpicture}
\vspace{-10pt}
\caption{Coloring of t-SNE embedding by force direction. We propose that using forces in the equilibrium embedding as features can provide additional information for (sub-)cluster identification. The wheel in the bottom right identifies colors with directions.}
\label{fig:mnist_directions}
\end{figure}
\end{center}

Although t-SNE is widely used in applications, there is currently little theory to explain how it works. The algorithm can have profoundly different outputs with different choices of parameters, and it is well known that it simply does not work well with certain types of data, such as manifolds \cite{george1, wattenberg}. Identifying when t-SNE results are meaningful and how they should be interpreted is thus an important open question for its practical use. \\

\textbf{Existing results.} 
Linderman \& Steinerberger \cite{george1} interpreted t-SNE as a dynamical multi-particle system that obeys certain ellipticity conditions. This approach was further developed by Arora, Hu \& Kothari \cite{arora}. These results show, roughly, that if the underlying data $\left\{x_1, \dots, x_n\right\} \subset \mathbb{R}^d$ is strongly clustered, then t-SNE will recover the clustering. The results also explain, more qualitatively than quantitatively, the underlying mechanism by which this occurs. One goal of this paper is to introduce a new approach for obtaining quantitative predictions of t-SNE results. \\

Since t-SNE is highly popular, there are many experimental studies and guides for selecting parameters and validating results. We especially highlight two recent studies by Kobak \& Berens \cite{kobak} and Wang, Huang, Rudin \& Shaposhnik \cite{wang}. We also point out the study by B\"ohm, Behrens \& Kobak \cite{bohm}, which shows that force-based methods lie on an attraction-repulsion spectrum and can be empirically recovered by tuning the forces used to create the embedding. We believe that this idea is a very promising step towards a unified theory of these algorithms. \\

\textbf{Outline of the paper.} We discuss two (independent) new ideas.
\begin{enumerate}
\item \textbf{Forceful Colorings.} We propose using the attractive and repulsive \textit{forces} used to generate the t-SNE embedding as features. Naturally, when the embedding reaches equilibrium, these force vectors cancel; however, we find that either one of the two can be used as an additional classifier that carries a lot of information. This idea can be applied to any force-based technique and will be explained in greater detail in \S 2.
\item \textbf{Mean Field Limits.} We present a new approach for obtaining quantitative predictions on the behavior of minimizers of the t-SNE energy (cost function). The main idea is to base the analysis on assumptions about the input similarities $p_{ij}$ rather than the data $\left\{x_1, \dots, x_n\right\}$. In particular, we set $p_{ij}$ as the adjacency matrix of a random graph. For suitable graph models, such as Erd\H{o}s-Renyi or random $k$-regular graphs, a stochastic regularization phenomenon allows us to simplify and rewrite the t-SNE cost as a fairly classical calculus of variations problem. We solve the problem for random $k$-regular graphs and come to an interesting conclusion: the mean field limit predicts that the energy minimizer of a $k-$regular random graph is, asymptotically, given by an \textit{annulus}. This result is interesting in its own right, but it also highlights how little we actually know about the t-SNE energy. These results are described in \S 3 and derived in \S 4. 
\end{enumerate}

\section{Forceful Colorings}
\subsection{Force-based methods.} This section presents a simple new idea which may prove to be useful for applications of force-based embedding methods. We begin by describing the logic behind force-based methods in a unified way. A more complete description of t-SNE specifically is given in \S 4.1. 
Most force-based dimensionality reduction techniques work by minimizing some notion of energy $E$ for the output embedding $\mathcal{Y} = \left\{y_1, \dots, y_n\right\} \subset \mathbb{R}^s$. Letting $\mathcal{X} = \left\{x_1, \dots, x_n \right\} \subset \mathbb{R}^d$ be our input dataset, we initialize $\mathcal{Y}$ and apply an iterative method on the coordinates to minimize $E$. Each step of the optimization can usually be interpreted as an interaction between attractive and repulsive forces that moves the particle system $\mathcal{Y}$ towards a locally optimal configuration. 
\begin{center}
\begin{figure}[h!]
\begin{tikzpicture}
\node at (0,0) {\includegraphics[width=0.38\textwidth]{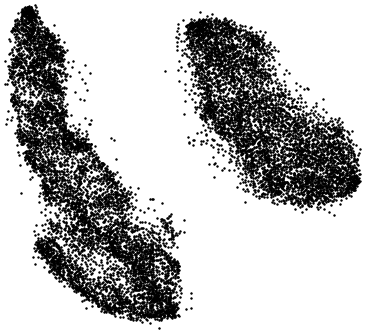}};
\node at (6,0) {\includegraphics[width=0.4\textwidth]{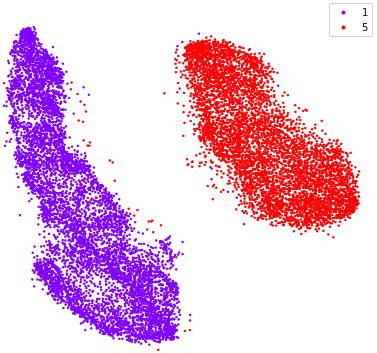}};
\node at (3.5,-6) {\includegraphics[width=0.8\textwidth]{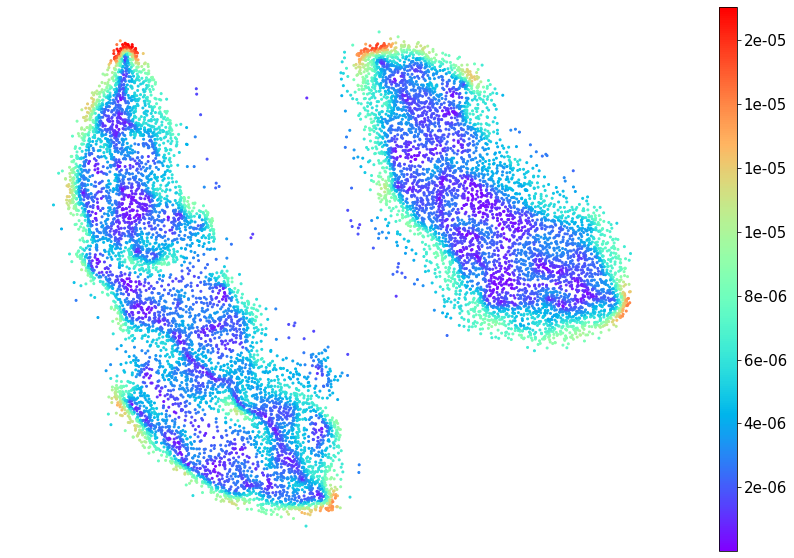}};
\end{tikzpicture}
\vspace{-10pt}
\caption{t-SNE embedding of the digits 1 and 5 from MNIST (top left) with ground truth labels (top right). Coloring by the magnitude of attractive forces on a point (bottom) hints at substructures within the clusters.}
\label{fig:magnitude}
\end{figure}
\end{center}

For t-SNE specifically, we use gradient descent to optimize the energy functional
$$
E(y_1, \dots, y_n) = \sum_{i \neq j} p_{ij} \log\left(\frac{p_{ij}}{q_{ij}}\right)
$$
Here $p_{ij}$ represents pairwise similarities in the input space $\mathbb{R}^d$ and $q_{ij}$ represents pairwise similarities in the output space $\mathbb{R}^s$. $E$ is minimized when $p_{ij}$ and $q_{ij}$ have the same distribution. We update each $y_i$ using the negative gradient:
$$ 
-\frac{\partial E}{\partial y_i} = 4\sum_{j \neq i} p_{ij} q_{ij} Z (y_j - y_i) - 4\sum_{j \neq i} q_{ij}^2 Z (y_j - y_i),
$$
where $Z$ is a normalization factor for $q_{ij}$ calculated from $\mathcal{Y}$. The first term is an attractive force that moves $y_i$ towards points $y_j$ for which $p_{ij}$ is large. These points correspond to samples $x_j$ which are close to sample $x_i$ in the input data. The second term is a repulsive force that moves $y_i$ away from points $y_j$ for which it is too close. This prevents the formation of degenerate clusters. The net effect, hopefully, is that attraction dominates for pairs of points which are nearby in $\mathcal{X}$ while repulsion dominates for pairs of points which are distant, so that the final embedding $\mathcal{Y}$ preserves the neighborhood relations of the input.

\subsection{Forceful Colorings.} We reach a local minimum of the t-SNE energy functional when the attractive and repulsive forces on each $y_i$ cancel, i.e. 
$$ 
\frac{\partial E}{\partial y_i} = 0 \qquad \qquad \forall~1 \leq i \leq n
$$
Though the net force on each point is $0$, the magnitudes of the attraction and repulsion (generally) do not vanish. The main insight is that these forces actually 
\begin{center}
\begin{figure}[h!]
\begin{tikzpicture}
\node at (2.5,-2.3) {\includegraphics[width=0.17\textwidth]{wheel}};
\node at (0,0) {\includegraphics[width=0.7\textwidth]{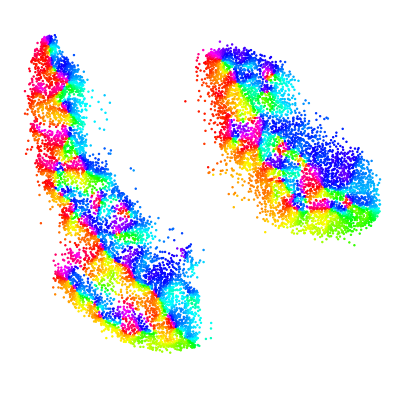}};
\end{tikzpicture}
\vspace{-10pt}
\caption{t-SNE embedding of MNIST 1 and 5 (see also Fig. \ref{fig:magnitude}) colored by direction of the attractive forces acting on a point. The wheel identifies colors with directions. The forceful coloring reveals rich substructures.}
\label{fig:dir}
\end{figure}
\end{center}
contain substantial information on the embedding structure while being easy to calculate. In fact, they are computed as part of each gradient descent step.
\begin{quote}
\textbf{Main idea}\textbf{.} 
The attractive (or, equivalently, repulsive) forces on a particle organize clusters into force sinks (sources) which can be used to identify meaningful substructures in the data.
\end{quote}

This principle is based on empirical observations. We have not found it stated elsewhere in the literature, and we believe it to be possibly quite useful. A priori, a generic t-SNE embedding can be challenging to interpret, as it is not always clear how exactly to separate clusters. In Fig. \ref{fig:mnist}, for example, we see that it is impossible to distinguish the purple and light blue clusters, representing $4$ and $9$ respectively, based on the raw output. When we color the embedding by directions, however, we see the emergence of patterns that roughly correspond to the underlying ground truth (Fig. \ref{fig:mnist_directions}). We observe a similar phenomenon for the brown, yellow, and red clusters (representing $5$, $8$, and $3$).

\begin{center}
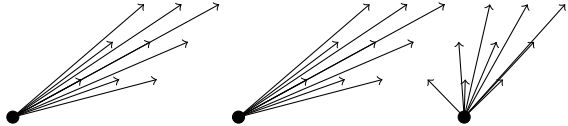
\begin{figure}[h!]
\begin{tikzpicture}
\filldraw (0,0) circle (0.08cm);
\filldraw (3,0) circle (0.08cm);
\filldraw (6,0) circle (0.08cm);
  \foreach \x in {1,...,3}
    \foreach \y in {1,...,3}
      {
        \draw [->] (0,0) -- (0.5*\x+0.4142*\y, 0.5*\y);
      \draw [->] (3,0) -- (3+0.5*\x+0.4142*\y, 0.5*\y);
      \draw [->] (6,0) -- (4.6+0.5*\x+0.4142*\y, 0.5*\y);
      }
\end{tikzpicture}
\caption{Another interpretation of forceful colorings: since t-SNE preserves neighborhood structure, we expect that points which are similar in the input data will be subject to similar forces. On the other hand, nonhomogenous force vectors may indicate that the points are quite different despite being close in the embedding.}
\label{fig:rep}
\end{figure}
\end{center}

\begin{center}
\begin{figure}[b!]
\begin{tikzpicture}
\node at (0,-2.5) {\includegraphics[width=0.25\textwidth]{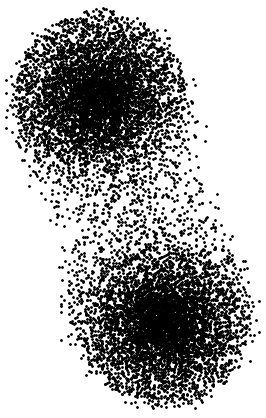}};
\node at (3,-2.5) {\includegraphics[width=0.25\textwidth]{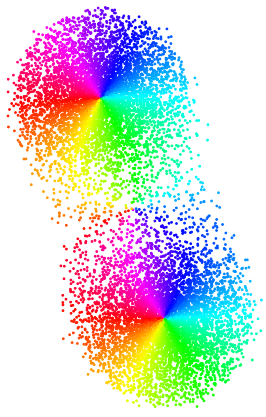}};
\node at (6.5,-2.5) {\includegraphics[width=0.35\textwidth]{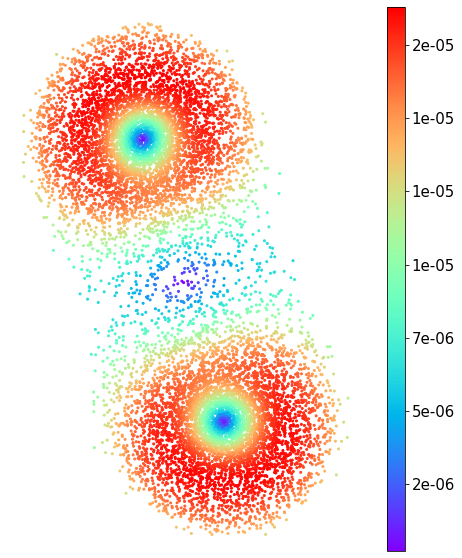}};
\end{tikzpicture}
\vspace{-10pt}
\caption{t-SNE embedding of two Gaussian clusters (left), the attractive forces (middle) and size of these forces (right). There is an ambiguous region in the middle, but it is possible to discern cluster identity from the direction of the attractive forces.}
\label{fig:gauss}
\end{figure}
\end{center}
\vspace{-20pt}

We also hypothesize that the force vectors can be used to measure the local homogeneity of the data. If two points $x_i$ and $x_j$ are similar in the original dataset, then they likely have similar affinities $p_{k\ell}$. As a result, we can expect that (1) $y_i$ and $y_j$ will be nearby in the final t-SNE embedding and (2) attractive forces on $y_i$ and $y_j$ will have similar magnitudes and directions. On the other hand, if the forces on nearby embedded points are highly dissimilar, they may represent dramatically different samples in the original dataset (Fig. \ref{fig:rep} and Fig. \ref{fig:gauss}). 

\subsection{Magnitude and Direction.} We found it interesting to consider the magnitude and direction of the attraction (repulsion) vectors in isolation. As examples, we plotted embeddings of the digits $1$ and $5$ from MNIST (Fig. \ref{fig:magnitude}, \ref{fig:dir}) and two high-dimensional Gaussian clusters (Fig. \ref{fig:gauss}). For both datasets, we observe that the forces are generally stronger on the edges of a cluster. This is not surprising since for an embedded point $y_i$ near a cluster boundary, the points $y_j$ with high input similarity $p_{ij}$ must lie in a halfspace about $y_i$, which limits the vector cancellation of attractive forces. However, the magnitude coloring effectively illuminates the structure of the cluster's interior. In particular, we see the emergence of connected regions separated from other regions by a dramatic change of force. These internal regions become more clear when we plot the \textit{direction} of the vector. 

\begin{center}
\begin{figure}[h!]
\begin{tikzpicture}
\node at (0,0) {\includegraphics[width=0.3\textwidth]{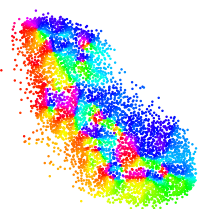}};
\draw [ultra thick] (0.3,0) -- (0.3,1.8) -- (-1.8, 1.8) -- (-1.8, 0) -- (0.3, 0);
\node at (5,0) {\includegraphics[width=0.6\textwidth]{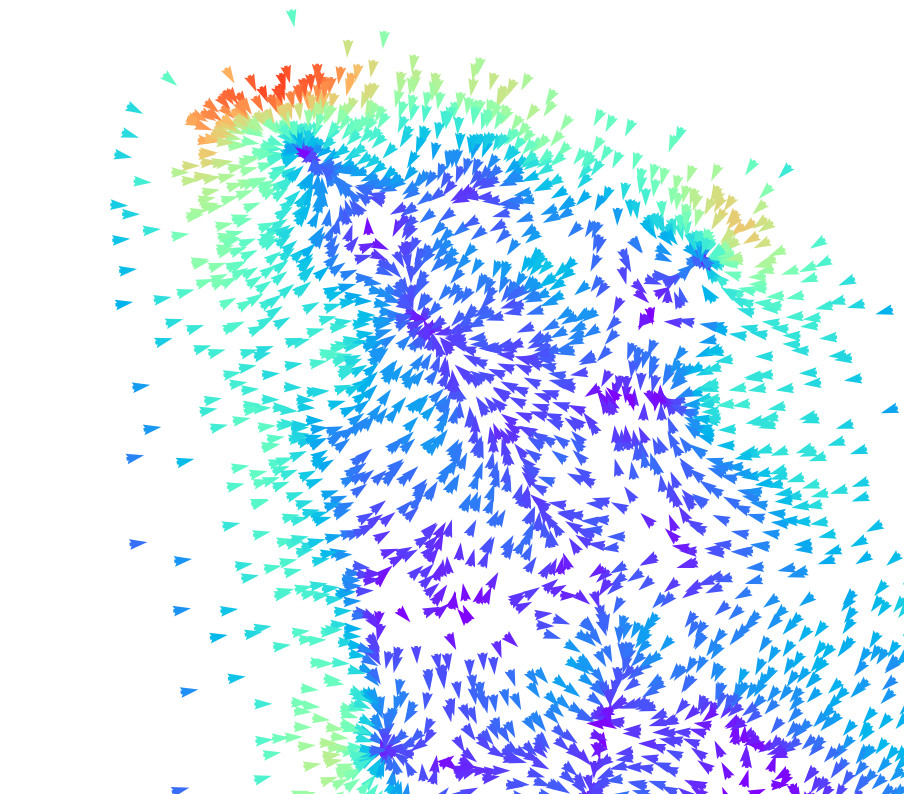}};
\end{tikzpicture}
\vspace{-10pt}
\caption{The same t-SNE embedding of MNIST 1 and 5 as above. Zooming into a tiny region (on the left) and drawing the forces as a vector field (colored by magnitude) reveals remarkable inner structure that can serve as additional feature.}
\label{fig:vector}
\end{figure}
\end{center}
\vspace{-10pt}
Naturally, we would like to use this information to refine t-SNE and other related methods. This could be done in many ways. For example, we can:
\begin{enumerate}
\item \label{enum:subcluster} use the vector field generated by the attractive forces to partition clusters into sub-clusters and/or refine cluster associations.
\item \label{enum:homogeneity} use the overall behavior of the force vector within a cluster as a measure of the cluster homogeneity.
\item \label{enum:earlyform} use force vectors for the identification of `barrier clusters' that formed too rapidly (see Fig. \ref{fig:mnist} and below for details).
\item compare the magnitude of the force vector acting on a point across multiple independent runs of t-SNE.
\item only use information from runs where the attractive forces acting on a specific particle end up being small for proper group identification.
\end{enumerate}
We illustrate (\ref{enum:subcluster}) using the MNIST embedding of $1$ and $5$. Focusing on the cluster of $5$'s, we observe that the attraction vector field contains several `sinks' -- regions where forces converge towards a single point (Fig. \ref{fig:vector}). We identified three potential subclusters using these sinks, and checked their coherence by computing their average image (Fig. \ref{fig:subclusters}). Since the images are sharp, most of the pictures in the cluster are similar to the mean. Moreover, the means themselves appear to represent different handwriting styles. For instance, digits in cluster $1$ have the most severe slant, while digits in $3$ have the most pronounced loop. This indicates that force vector fields can be useful for identifying subfeatures in a cluster's interior.

Idea (\ref{enum:homogeneity}) was inspired by Fig. \ref{fig:rep} and by the fact that the Gaussian clusters (Fig. \ref{fig:gauss}) contain a single sink while the MNIST clusters (Fig. \ref{fig:vector}) have a more turbulent vector field.
We finally comment on (\ref{enum:earlyform}). Sometimes, during the t-SNE gradient descent, data of the same type simultaneously starts forming clusters in two different regions in space. One would assume a priori that these two clusters would then move towards each other and merge into a larger cluster. However, it is sometimes possible that other clusters have formed between the two and now act as a barrier. In Fig. \ref{fig:mnist}, this is the reason purple and light blue ($4$ and $9$) are so deeply intertwined: the purple cluster would tend to move towards each other if embedded in isolation, but they are obstructed by the light blue barrier and vice versa. Naturally, this type of behavior would show up in the forceful colorings, which raises the interesting question how to identify and the circumvent this. Again, an abundance of ideas comes to mind, e.g. `teleporting' uniform clusters towards their force direction, temporarily increasing/decreasing the attractive/repulsive forces in that area, etc. 

\begin{center}
\begin{figure}[h!]
\begin{tikzpicture}
\node at (3,0) {\includegraphics[width=0.6\textwidth]{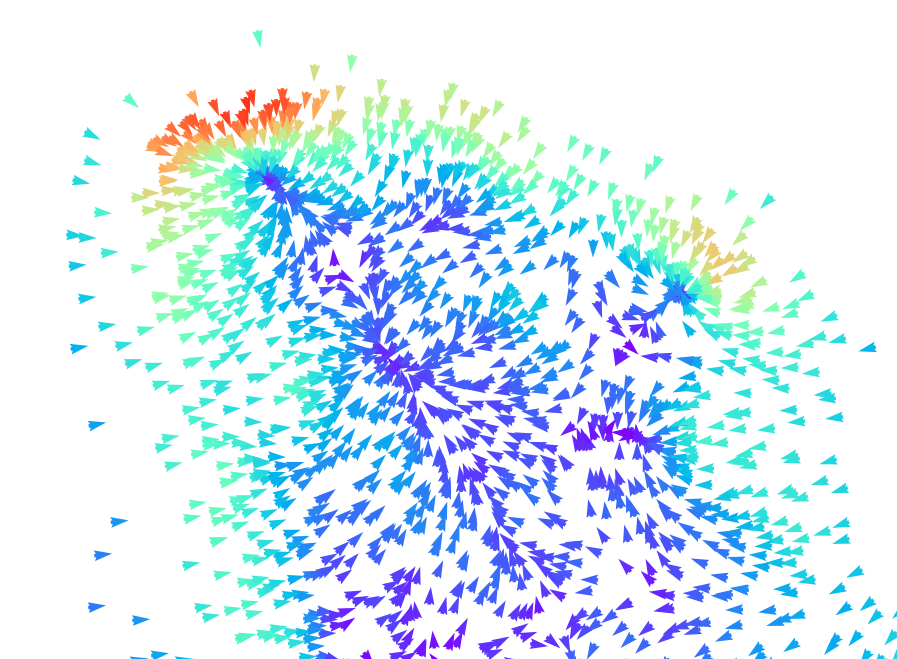}};
\draw [ultra thick] (2.3,0.6) -- (2.3, 2) -- (0.4, 2) -- (0.4, 0.6) -- (2.3, 0.6);
\draw [ultra thick] (4.1,-2.5) -- (4.1, 0.7) -- (1.5, 0.7) -- (1.5, -2.5) -- (4.1,-2.5);
\draw [ultra thick] (5.4,-0.2) -- (5.4, 1) -- (4.4, 1) -- (4.4, -0.2) -- (5.4,-0.2);
\node at (2.5, 2.1) {1};
\node at (1.2, -2.4) {2};
\node at (5.6, 1.1) {3};
\node at (-1.5,2) {\includegraphics[width=0.15\textwidth]{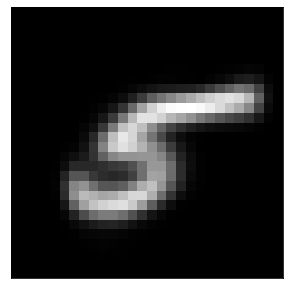}};
\node at (-1.5,0) {\includegraphics[width=0.15\textwidth]{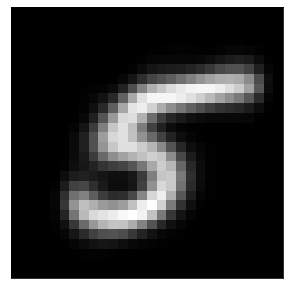}};
\node at (-1.5,-2) {\includegraphics[width=0.15\textwidth]{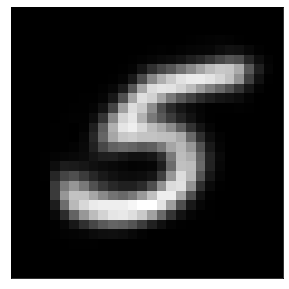}};
\node at (-2.6, 2) {1};
\node at (-2.6, 0) {2};
\node at (-2.6, -2) {3};
\end{tikzpicture}
\vspace{-10pt}
\caption{Subclusters identified using the vector field in Figure \ref{fig:vector}. The three subclusters contains roughly $335$, $563$, and $170$ samples respectively. The left hand side shows the mean MNIST image of each subcluster.}
\label{fig:subclusters}
\end{figure}
\end{center}
\vspace{-10pt}{}

We see that incorporating forces as a feature leads to many possible adaptations and variations on t-SNE and other force-based nonlinear dimensionality reduction methods. Investigating when these ideas are useful and how to best implement them seems like a very interesting avenue of future research.

\section{Mean Field Limit for t-SNE}
\subsection{Motivation.} This section describes purely theoretical work on t-SNE. Our goal was to find a simple setting in which the t-SNE functional can be studied using rigorous quantitative methods. We study the embedding of a single homogeneous cluster and emphasize 
\begin{enumerate}
\item that the underlying approach extends to more complicated settings (see \S 3.5). Such extensions lead to more complicated problems in calculus of variations that may be interesting in their own right.
\item that the underlying approach also extends to other attraction-repulsion based methods. Indeed, a similar type of analysis should be possible for many of the methods discussed in \cite{bohm, wang}.
\end{enumerate}

One reason there is so little theoretical work on t-SNE is the complexity of the setup: we are given a set of points $\mathcal{X} = \left\{x_1, \dots, x_n\right\} \subset \mathbb{R}^d$. For each pair $x_i$ and $x_j$, we define a measure of affinity $p_{ij}$. These affinities then fuel a dynamical system on $n$ particles $\mathcal{Y} = \left\{y_1, \dots, y_n \right\} \subset \mathbb{R}^s$ that determines the embedding. Each of these objects is already nontrivial on its own. The two existing theoretical approaches \cite{arora, george1} assume that the $p_{ij}$ are strongly clustered in order to deduce information about the dynamical system. Showing that t-SNE preserves pre-existing cluster structure amounts to a \textit{soft} analysis of the t-SNE mechanism. In contrast, we aim to present the first \textit{hard} analysis by making explicit quantitative statements about the output. This analysis will involve classical techniques from the calculus of variations and leads to interesting problems. It also extends to more complicated settings (see \S 3.5 for details).

\begin{center}
\begin{figure}[h!]
\begin{tikzpicture}
\node at (-0.2,0) {\includegraphics[width=0.32\textwidth]{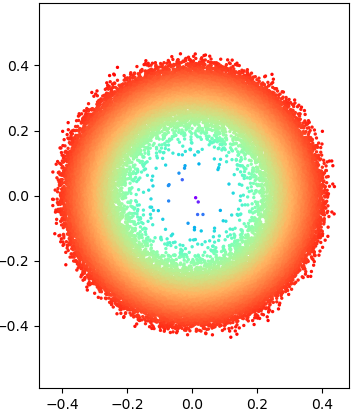}};
\node at (4,0) {\includegraphics[width=0.32\textwidth]{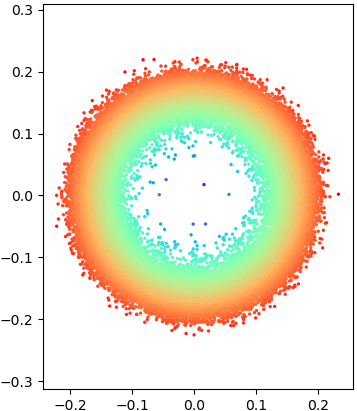}};
\node at (8.2,0) {\includegraphics[width=0.32\textwidth]{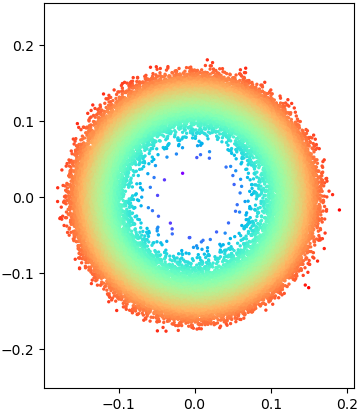}};
\end{tikzpicture}
\caption{An embedding of a $k-$regular graph on 40000 vertices: $k=40$ (left), $k=400$ (middle) and $k=4000$ (right). The mean field model predicts the diameter of the ring to scale as $\sim k^{-1/4} n^{-1/4}$. The emerging ring structure is \textit{not} reflective of any underlying circular structure in the data and purely an artifact of the variational structure of the t-SNE functional.}
\label{fig:ring}
\end{figure}
\end{center}
\vspace{-20pt}

\subsection{Random Regular Graphs and their Mean Fields.}
In the t-SNE algorithm, the input data set $\mathcal{X} \subset \mathbb{R}^d$ does not directly enter into the computation of the final output $\mathcal{Y}$. Rather, it is the affinities $p_{ij}$ on $\mathcal{X}$ which are used to generate $\mathcal{Y}$. We will argue that for the purpose of developing rigorous mathematical theory, it may be advantageous not to study t-SNE under some assumptions on $\mathcal{X}$ but to start with the setting where one only poses assumptions on the $p_{ij}$.
\begin{quote}
\textbf{Main Idea.} Instead of trying to impose structure on the original points $\left\{x_1, \dots, x_n\right\}$, a rigorous analysis of t-SNE should first address the case where the affinities $p_{ij}$ are structured. In particular, when the $p_{ij}$ are taken as the entries of an adjacency matrix of certain types of random graphs, there is a stochastic regularization phenomenon that simplifies the structure of the t-SNE energy.
\end{quote}
We tried to understand the implications of this idea in the very simplest case: embedding a single cluster in two dimensions.
There are at least two canonical models of what a perfectly homogeneous cluster could look like: (1) a random $k-$regular graph and (2) the Erd\H{o}s-Renyi random graph $G(n,p)$. We will see that with regards to an effective mean-field limit, both models behave somewhat similarly. A more refined analysis shows that one of the t-SNE energy terms has a larger variance under the Erd\H{o}s-Renyi model. This is also confirmed by numerical experiments: pictures like Fig. \ref{fig:ring} are easy to produce for $k-$regular graphs but it is not possible to get equally clear ring structures for the Erd\H{o}s-Renyi model (perhaps all that is required is a larger number of points but it could conceivably also be a difference in the actual variational structure).  

\begin{center}
\begin{figure}[h!]
\begin{tikzpicture}[scale=4]
  \foreach \x in {0,...,20}
    \foreach \y in {0,...,20}
      {
        \filldraw (0.05*\x+ 0.03*rand,0.05*\y+0.03*rand) circle (0.01cm);
      }
  \draw [thick] (-0.1,-0.1) -- (0.2, -0.1) -- (0.2, 0.2) -- (-0.1, 0.2) -- (-0.1,-0.1);    
    \draw [thick] (0.7,0.7) -- (1.1, 0.7) -- (1.1, 1.1) -- (0.7, 1.1) -- (0.7,0.7);    
\node at (-0.2, 0) {$A$};    
\node at (1.2, 0.9) {$B$};    
\end{tikzpicture}
\caption{Suppose the points are the final embedding of the vertices of an Erd\H{o}s-Renyi or a random $k-$regular graph: the number of edges running between the vertices in $A$ and the vertices in $B$ is under control and depends (up to a small error) only on the number of vertices in $A$ and $B$ \textit{independently of the embedding}.}
\label{fig:reg}
\end{figure}
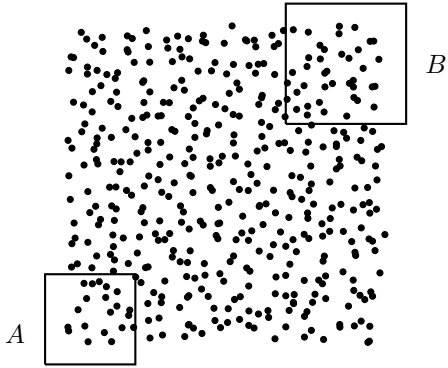
\end{center}
\vspace{-10pt}

Our derivation will initially not distinguish between the Erd\H{o}s-Renyi model and the model of a $k-$regular random graph (with $k \sim p \cdot n$). In \S \ref{sec:erdmean}, we will discuss the arising quantities for the Erd\H{o}s-Renyi model and describe the crucial variance term which disappears in the random $k-$regular model. Throughout the rest of the paper, we will then refine our argument for random $k-$regular graphs.
 The Erd\H{o}s-Renyi random graph $G(n,p)$ is a graph on $n$ vertices where any pair of vertices is connected with likelihood $0 < p < 1$, where $0 < p < 1$ is fixed and we let $n$ become large (this is the dense setting). For such a random graph, we set
$$ p_{ij} = \begin{cases} 1 \qquad &\mbox{if}~i \sim_{E} j \\
0 \qquad &\mbox{otherwise.} \end{cases}$$
This corresponds to a graph on $n$ vertices having $\sim p \binom{n}{2}$ edges. It is a tightly connected cluster, there are no distinguished vertices and no underlying symmetries: each vertex plays essentially the same role. A random $k-$regular graph is simply a graph on $n$ vertices chosen uniformly at random from
the set of $k-$regular graphs on $n$ vertices. We note that in our setting of interest, $k = p \cdot n$ where $0 < p < 1$ is a fixed constant as $n \rightarrow \infty$, those two models are fairly similar with respect to many aspects: in the Erd\H{o}s-Renyi model, each vertex has $\sim p \cdot n \pm \mathcal{O}(\sqrt{n})$ neighbors. 
Assuming that the $p_{ij}$ come from an Erd\H{o}s-Renyi graph or a random $k-$regular graph has several advantages: once $n$ becomes large, there is an interesting regularization effect: for any two arbitrary subsets of vertices, as long as the number of vertices in each subset is not too small, we can estimate the number of edges that run between them fairly accurately (see also Fig. \ref{fig:reg}) and
$$ \# \left\{(a,b) \in E: a \in A \wedge b \in B \right\} \sim p \cdot |A| \cdot |B| \sim \frac{k}{n}\cdot |A| \cdot |B|.$$
This is a remarkable property because it implies that the associated energy should be primarily dominated by the distribution of $\left\{y_1, \dots, y_n\right\} \subset \mathbb{R}^2$: what is mainly relevant is \textit{how} the points are distributed in $\mathbb{R}^2$ rather than how the underlying graph behaves. As such, we expect that, for given fixed $\left\{y_1, \dots, y_n\right\} \in \mathbb{R}^2$, the t-SNE energy is essentially constant for a randomly chosen graph from the same model. 
Since it's virtually constant, it should be very well described by its expectation with the square root of the variance describing the typical deviation -- and both of these quantities, the expectation $\mathbb{E}$ and the variance $\mathbb{V}$ can be computed. \\

This underlying assumption, that for a fixed random graph model the t-SNE energy is being essentially given purely as a function of the embedding points $\left\{y_1, \dots, y_n\right\} \subset \mathbb{R}^2$ is naturally a key ingredient and quite similar to many other models in, say, statistical physics: the behavior of individual particles is assumed the even out and to give rise to an emerging mean field. The consequences of this assumption are quite impactful: the energy is then given merely as a function of a distribution of points in the plane and we end up trying to minimize $I(\mu)$, where $I$ is a notion of energy and $\mu$ ranges over all probability measures in $\mathbb{R}^2$. This is a much more classical mathematical problem and many more tools become available. In particular, the approach naturally extends to other such nonlinear dimensionality reduction methods such as SNE \cite{hint}, ForceAtlas2 \cite{jac}, LargeVis \cite{largevis} or UMAP \cite{umap}. We believe this to be a very promising avenue for further investigations.

\subsection{Emerging Functionals.} Suppose now that the affinities $p_{ij}$ are given by one of the two random models described above and suppose that $\left\{y_1, \dots, y_n\right\} \subset \mathbb{R}^2$ are given points in the plane. The t-SNE energy of this set of points (the functional we aim to minimize) will be shown to simplify (for the reason discussed in \S 3.2). We introduce some notation by introducing the probability measure $\mu$ in $\mathbb{R}^2$
$$\mu = \frac{1}{n} \sum_{k=1}^{n} \delta_{y_k} ,$$
We determine the approximation as 
$$ \mbox{t-SNE energy} = \mathbb{E} \pm  \sigma \sqrt{\mathbb{V}},$$
where both expectation $\mathbb{E}$ and variance $\mathbb{V}$ are computed with respect to the random Erd\H{o}s-Renyi model. These terms are fairly explicit: in particular, for the expectation we have (up to lower order errors)
$$ \mathbb{E} \sim 2p \binom{n}{2}  \left[\int_{\mathbb{R}^2} \int_{\mathbb{R}^2}  \|x-y\|^4 d\mu(x) d\mu(y) - \left( \int_{\mathbb{R}^2} \int_{\mathbb{R}^2} \|x-y\|^2 d\mu(x) d\mu(y)\right)^2 \right]$$
and we have a similar expression for the variance.
 Let now $\mu$ be an arbitrary probability measure on $\mathbb{R}^2$, then we will consider a renormalized energy given by
\begin{align*}
J_{\sigma, \delta}(\mu) &= \int_{\mathbb{R}^2} \int_{\mathbb{R}^2}   \|x-y\|^4 d\mu(x) d\mu(y) - \left( \int_{\mathbb{R}^2} \int_{\mathbb{R}^2} \|x-y\|^2 d\mu(x) d\mu(y)\right)^2\\
&+  \frac{\sigma}{ \sqrt{p}} \delta \left( \int_{\mathbb{R}^2} \int_{\mathbb{R}^2}  \|x - y\|^4 d \mu(x) d\mu(y)\right)^{1/2}.
\end{align*}
We expect that the behavior of the t-SNE energy for a random $k-$regular graph is approximately given by $J_{\sigma, \delta}$ with $\delta \sim n^{-1}$, $p \sim k/n$ and $\sigma$ a parameter at scale $\sim 1$.
The first two terms combined are always nonnegative: using the Cauchy-Schwarz inequality, we see that
$$  \int_{\mathbb{R}^2} \int_{ \mathbb{R}^2} \|x-y\|^4 d\mu(x) d\mu(y) -  \left( \int_{\mathbb{R}^2} \int_{\mathbb{R}^2} \|x-y\|^2 d\mu(x) d\mu(y)\right)^2 \geq 0.$$
The first two terms may thus be understood as a `Cauchy-Schwarz deficit' which then interacts with the two remaining terms. We show that minimizers can be characterized and have a peculiar shape.

\begin{thm} Among radial measures $\mu$, the functional $J_{\sigma, \delta}(\mu)$ has a unique minimizer (up to translation symmetry) given by the normalized arclength measure on a circle if $\sigma < 0$ or a Dirac measure centered in a point if $\sigma > 0.$ \end{thm}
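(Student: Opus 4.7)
The plan is to exploit the translation and rotational symmetries to reduce $J_{\sigma,\delta}$ to a function of just two scalar moments, and then apply the Cauchy--Schwarz inequality between those moments. By the translation invariance built into the statement, I may assume the radial measure $\mu$ is centered at the origin. Letting $\nu$ denote the pushforward of $\mu$ under $x \mapsto \|x\|$, I introduce the two radial moments
\[ R := \int_{\mathbb{R}^2} \|x\|^2 \, d\mu(x) = \int_0^\infty r^2 \, d\nu(r), \qquad S := \int_{\mathbb{R}^2} \|x\|^4 \, d\mu(x) = \int_0^\infty r^4 \, d\nu(r). \]

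Next I would expand $\|x-y\|^4 = (\|x\|^2 - 2\langle x,y\rangle + \|y\|^2)^2$ and integrate against $d\mu(x) d\mu(y)$. Centering eliminates every term containing an odd power of a linear coordinate, and rotation invariance on $\mathbb{R}^2$ gives $\int x_i x_j \, d\mu = (R/2) \delta_{ij}$, from which $\iint \langle x,y\rangle^2 \, d\mu\, d\mu = R^2/2$. After collecting terms I obtain
\[ \iint \|x-y\|^4 \, d\mu(x) d\mu(y) = 2S + 4R^2, \qquad \iint \|x-y\|^2 \, d\mu(x) d\mu(y) = 2R, \]
so the Cauchy--Schwarz deficit comprising the first line of $J_{\sigma,\delta}$ collapses cleanly to $2S$, and the functional on centered radial measures reduces to
\[ J_{\sigma,\delta}(\mu) = 2S + \frac{\sigma \delta}{\sqrt{p}} \sqrt{2S + 4R^2}. \]

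Cauchy--Schwarz applied to the probability measure $\nu$ yields $R^2 \leq S$, with equality iff $\nu$ is a Dirac mass --- equivalently, iff $\mu$ is normalized arclength on a single circle (degenerating to $\delta_0$ when the radius is zero). For $\sigma > 0$ both summands of $J_{\sigma,\delta}$ are non-negative and vanish jointly only at $R = S = 0$, so $\mu = \delta_0$ is the unique minimizer (up to translation). For $\sigma < 0$ I would fix $S$ and observe that $\partial_R J_{\sigma,\delta} = (\sigma \delta / \sqrt{p}) \cdot 4R / \sqrt{2S + 4R^2} < 0$, so $J$ is strictly decreasing in $R$ on $[0, \sqrt{S}]$. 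Any minimizer must therefore saturate $R = \sqrt{S}$, which by the equality case of Cauchy--Schwarz forces $\mu$ onto a single circle of radius $S^{1/4}$. Substituting $R = \sqrt{S}$ leaves the one-variable problem of minimizing $h(S) = 2S + (\sigma \delta \sqrt{6}/\sqrt{p}) \sqrt{S}$, which is strictly convex in $\sqrt{S}$ with negative linear coefficient and hence has a unique positive minimizer, pinning down the circle's radius.

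The main obstacle I anticipate is the algebraic simplification in the second step: the neat cancellation $(2S + 4R^2) - (2R)^2 = 2S$ is only visible after centering, radial symmetry, and the $d=2$ identity $\int x_i x_j \, d\mu = (R/2)\delta_{ij}$ are combined in the right order. Once this collapse is in place the remainder is short --- a sign-of-$\sigma$ case split together with monotonicity of $J$ in $R$ and the equality case of Cauchy--Schwarz. As a sanity check, the optimal radius will satisfy $r \sim (\delta / \sqrt{p})^{1/2} \sim k^{-1/4} n^{-1/4}$ at the scalings $\delta \sim n^{-1}$, $p \sim k/n$ stated in the paper.
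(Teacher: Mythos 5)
Your proposal is correct and follows essentially the same route as the paper: both reduce $J_{\sigma,\delta}$ on radial measures to the two moments $R=\int r^2\,d\nu$ and $S=\int r^4\,d\nu$ via the identities $\iint\|x-y\|^2\,d\mu\,d\mu=2R$ and $\iint\|x-y\|^4\,d\mu\,d\mu=2S+4R^2$, then apply Cauchy--Schwarz ($R^2\le S$) and a sign-of-$\sigma$ case split with the same monotonicity-in-$R$ argument and the same one-variable minimization pinning the radius at scale $(\delta/\sqrt{p})^{1/2}$. The only (immaterial) difference is that you obtain the moment identities by algebraic expansion of $\|x-y\|^4$ using centering and isotropy, whereas the paper averages over pairs of concentric circles in polar coordinates.
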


Since $\delta = n^{-1}$, a more precise analysis of the scaling (done in the proof of the Theorem) would predict that the optimal t-SNE embedding of a random $k-$regular graph (assuming $k$ is proportional to $n$) to behave like a ring with radius $\sim k^{-1/4} n^{-1/4}$ (see Fig. \ref{fig:ring}). Numerical experiments support this conjecture and we see the final points arranged in a ring-like shape; however, it is more difficult to test the scaling since the decay is rather slow. Moreover, our derivation does employ a Taylor expansion: we would thus expect the scaling to be rather accurate once the ring is sufficiently small (say, diameter $\leq 0.001$). As seen in Fig. \ref{fig:ring} even for $n=40000$ and $k=4000$, the ring still has diameter $\sim 0.1$.

\subsection{Open Problems.} This motivates several interesting problems. 
\begin{enumerate} 
\item \textbf{Erd\H{o}s-Renyi model.} Can this type of analysis be carried out for the Erd\H{o}s-Renyi model? The difficulty lies in the quantity
$$\log{\left( - n + n^2 \int_{\mathbb{R}^2} \int_{\mathbb{R}^2} \frac{d\mu(x) d\mu(y)}{1+\|x-y\|^2}  \right)}$$
which, for fixed measure $\mu$ and $n \rightarrow \mathbb{R}$ scales like $\sim \log{n}$. We prove that, without this term, the measure localizes at
scale $\sim n^{-1/2}$ which is exactly the scale at which cancellation between the two terms occurs.
\item \textbf{Sparse Random Graphs.} We always work with $0 < p < 1$ or $k/n$ fixed, our Graphs are always dense. One could also wonder about the case where $p,k$ become smaller as $n$ gets larger: one would assume that randomness then starts playing a larger role. 
\item \textbf{Multiple clusters.} Can this analysis be extended to multiple clusters? The derivation of the functional itself is not difficult and can be carried out along the same lines. Numerical experiments suggest that the limit will not be a `ring'  but rather two more disk-like clusters. If the two clusters end up being close to one another, a Taylor expansion may be carried out, if they stay at a distance, then a new approach will be required. We emphasize that these problems are really problems regarding the structure of energy minimizers of certain functionals and, while possibly hard, quite classical.
\item \textbf{Other methods.} We would expect that the embedding of a random $k-$regular graph will asymptotically concentrate in a point for most of these methods. A similar analysis can then be conceivably carried out -- different functionals may come with different characteristic length scales which might be an interesting point for comparison. In particular, direct variations of t-SNE have been proposed (see e.g. \cite{kobak0}) for which the underlying analysis might be somewhat similar. As mentioned above, we believe this to be a promising line for further research.
\item \textbf{Expanders.} Other connections are conceivable. In particular, the regularization property of Erd\H{o}s-Renyi graphs that we use states that for any two subsets of vertices $A,B \in V$, the number of edges between $A$ and $B$ is proportional to $\sim p \cdot |A| \cdot |B|$. This property has in fact appeared in the context of expander graphs. More precisely, let $G = (V,E)$ be any $d-$regular random graph on $n$ vertices. Then, for any disjoint $A,B \subset V$, the Expander Mixing Lemma (see Alon \& Chung \cite{alon}) says that
$$ \left| \# \left\{ (a,b) \in E: a \in A \wedge b \in B \right\} - \frac{d}{n} |A| \cdot |B| \right| \leq \lambda \sqrt{|A| \cdot |B|},$$
where $\lambda$ is the second largest eigenvalue of the adjacency matrix $A_G$. This is exactly the type of regularity property used in our derivation -- it is thus conceivable that one might be able to derive rigorous bounds about the t-SNE embedding of an expander graph (though it might be less clear how one would generalize such an approach to the setting where there is more than one cluster).  
\end{enumerate}

\section{A Single Cluster: Proof of the Theorem}
\S 4 contains our theoretical arguments: \S 4.1 formally introduces t-SNE, \S 4.2 computes the mean field (along the lines discussed in \S 3.2), and \S 4.3 shows that this is sufficient to deduce that, as $n \rightarrow \infty$, the optimal t-SNE embedding of an Erd\H{o}s-Renyi random graph will shrink in diameter. This shrinkage will allow us to apply a Taylor approximation in \S 4.4 which will be shown to have a scaling symmetry in \S 4.5. Finally, we prove the Theorem in \S 4.6.

\subsection{The t-SNE functional.} We quickly recall the t-SNE algorithm. Given a set of points $\mathcal{X} = \{x_1, x_2, ...,  x_n\} \subset \mathbb{R}^d$, we define the affinity $p_{ij}$ between any pair as

$$
\label{p_ij} p_{ij}  = \frac{ p_{i|j} + p_{j|i} }{2n},  \qquad \mbox{where} \qquad p_{i|j} = \frac{\exp{(-\|  x_i -  x_j \|^2 /2 \sigma_i^2 )}}{\sum_{k\neq i} \exp{( - \|  x_i -  x_k \|^2/ 2 \sigma_i^2 }) }.
$$

The parameters $\sigma_i$ are usually set based on the local scale of the neighborhood of $x_i$. This expression for $p_{ij}$ will not be relevant to our analysis. Assume now that $\mathcal{Y} = \left\{y_1, \dots, y_n\right\} \subset \mathbb{R}^s$ is our embedding. We describe a notion of energy that aims to quantify how `similar' the points $x_i$ and the points $y_i$ are. For this, we define the analogue of the $p_{ij}$: the quantity $q_{ij}$ will denote a notion of similarity between points $ y_i$ and $ y_j$ via

$$
q_{ij} = \frac{(1 + \| y_i -  y_j\|^2)^{-1}}{\sum_{k\neq \ell} (1 +\|  y_k -  y_\ell\|^2 )^{-1}}.
$$

The energy is then defined as

$$
E = \sum_{i,j=1 \atop i\neq j}^n p_{ij} \log \frac{p_{ij}}{q_{ij}}.
$$

t-SNE then uses gradient descent to find an output $\mathcal{Y}$ which minimizes $E$. The remaining question is how to initialize $\mathcal{Y}$ for this step. Early implementations of t-SNE chose $\mathcal{Y}$ uniformly at random. However, there is evidence that initializing with the results of another dimensionality reduction method can better preserve global structure in the final embedding (see Kobak \& Linderman \cite{kobak3}). In light of our arguments above (especially \S 3.2), it is clear that initialization will not be important in our argument. 
Finally, we observe that $E$ is solely a function of the $q_{ij}$ during the gradient descent optimization, since the $p_{ij}$ are constants determined by the input data. This naturally raises the question of whether other functions for $q_{ij}$ besides the one provided could produce interesting results. Empirical results have shown that this is indeed the case, as the decay of the function seems to correspond to the resolution with which a cluster's substructure is displayed (see \cite{kobak0}). However, many other choices of functionals are possible, and many result in methods that work quite well. We refer to B\"ohm, Behrens \& Kobak \cite{bohm} for a unifying overview. 
In our analysis of t-SNE, we will fix the embedding dimension $s = 2$, and we will set $p_{ij}$ as the adjacency matrix of an Erd\H{o}s-Renyi or random $k$-regular graph. The original $p_{ij}$ form a probability distribution while our values are in $\{0, 1\}$. This is not an issue because, as is easily seen from the structure of the energy $E$, the local minima of $E$ are invariant under rescaling of $p_{ij}$.

\subsection{Mean Fields.} \label{sec:erdmean} This section discusses the first approximations. 
We would like to understand the behavior of the following functional for large $n$:
$$ 
\sum_{i,j=1 \atop i\neq j}^n p_{ij} \log{\frac{p_{ij}}{q_{ij}}} =  \sum_{i,j=1 \atop i\neq j}^n p_{ij} \log{p_{ij}} + \sum_{i,j=1 \atop i\neq j}^n p_{ij} \log{\frac{1}{q_{ij}}} \rightarrow \min.
$$
The first of these two sums only depends on $p_{ij}$, which are externally given and independent of the actual embedding of points in $\mathbb{R}^2$. We can thus safely ignore the first sum containing only $p_{ij}$ quantities. It remains to understand the second sum.
Plugging in the definition of $q_{ij}$ yields: 
$$ 
\boxed{ \sum_{i,j=1 \atop i \neq j}^n p_{ij} \log \left( \sum_{k, \ell =1 \atop k \neq \ell}^n  \frac{1}{1+\|y_{\ell} - y_k\|^2} \right) + \sum_{i,j=1 \atop i \neq j}^n p_{ij} \log{(1 + \|y_i - y_j\|^2)} \rightarrow \min.}
$$
This is the problem that we will analyze for the remainder of the paper.  We observe that the functional is comprised of two terms. We will compute the expectation and variance for both.
\begin{enumerate}
\item For \textbf{k-regular graphs}, the first term simplifies because the number of edges is constant ($|E| = kn/2$). The inner sum does not depend on $i,j$ and
$$
\sum_{i,j=1 \atop i \neq j}^n p_{ij} \log \left( \sum_{k, \ell =1 \atop k \neq \ell}^n  \frac{1}{1+\|y_{\ell} - y_k\|^2} \right) = kn \cdot \log \left( \sum_{k, \ell =1 \atop k \neq \ell}^n  \frac{1}{1+\|y_{\ell} - y_k\|^2} \right).
$$
\item In the \textbf{Erd\H{o}s-Renyi model}, the first term has roughly the same expectation because
$$
\mathbb{E} \sum_{i,j=1 \atop i \neq j}^n p_{ij}  = 2p \binom{n}{2}
$$
but a nontrivial variance (computed below).
\end{enumerate}
As above, we simplify notation by introducing the probability measure
$$ 
\mu = \frac{1}{n} \sum_{i=1}^{n}{ \delta_{y_i}}.
$$
\subsubsection{The first term.}
We can write the expectation of the first term for the Erd\H{o}s-Renyi random model as
$$
\mathbb{E} \sum_{i,j=1 \atop i \neq j}^n p_{ij} \log \left( \sum_{k, \ell =1 \atop k \neq \ell}^n  \frac{1}{1+\|y_{\ell} - y_k\|^2} \right) = 2p \binom{n}{2} \log{\left( - n + n^2 \int_{\mathbb{R}^2} \int_{\mathbb{R}^2} \frac{d\mu(x) d\mu(y)}{1+\|x-y\|^2}  \right)}.
$$
Replacing $2p \binom{n}{2}$ with $kn$ leads to the expectation for $k$-regular graphs (which is actually a constant). 
As evidenced by the numerical examples above and the arguments in \S 4.3, in the asymptotic regime the measure $\mu$ will concentrate around a point. This means that we expect the integral to be size $\sim 1$ which turns the $-n$ factor inside the logarithm into a lower order term -- as it turns out, it will be structurally similar to other lower order terms and can be absorbed by them. 
For simplicity of exposition, we first ignore the lower order term and use the algebraically more convenient approximation
$$
\mathbb{E}_2  = 2p \binom{n}{2} \log{\left( n^2 \int_{\mathbb{R}^2} \int_{\mathbb{R}^2} \frac{d\mu(x) d\mu(y)}{1+\|x-y\|^2}  \right)}
$$
which can also be written as
$$
\mathbb{E}_2 =2p \binom{n}{2} \log{(n^2)} + 2p \binom{n}{2}  \log{\left(  \int_{\mathbb{R}^2} \int_{\mathbb{R}^2} \frac{d\mu(x) d\mu(y)}{1+\|x-y\|^2}  \right)},
$$
where only the second term depends on $\mu$. In Section \ref{subsec:pert}, we show how to compare $\mathbb{E}$ and $\mathbb{E}_2$.
In the $k$-regular case, the variance is 0 because this term is constant. In the Erd\H{o}s-Renyi case, this is slightly different but it is relatively easy to compute
$$
\mbox{the variance} \qquad \mathbb{V} \sum_{i,j=1 \atop i \neq j}^n p_{ij} \log \left( \sum_{k, \ell =1 \atop k \neq \ell}^n  \frac{1}{1+\|y_{\ell} - y_k\|^2} \right).
$$
Recall that the variance of a sum of independent random variables is given by the sum of the variances and that for a random variable $X \sim \text{Bern}(p)$, we have $\mathbb{V} X = p(1-p)$. Therefore,
 $$ \mathbb{V}  = 2p (1-p) \binom{n}{2} \left[ \log{\left( - n + n^2 \int_{\mathbb{R}^2} \int_{\mathbb{R}^2} \frac{d\mu(x) d\mu(y)}{1+\|x-y\|^2}  \right)}\right]^2.$$
\subsubsection{The Second Term.}
It remains to analyze the second term.
$$ 
\sum_{i,j=1 \atop i \neq j}^n p_{ij} \log{(1 + \|y_i - y_j\|^2)}
$$
This term is more involved since it couples $p_{ij}$ with the location of $y_i$ and $y_j$ in $\mathbb{R}^2$. However, we are able to treat the random $k-$regular model and the Erd\H{o}s-Renyi model simultaneously. 
Taking two arbitrary subsets of vertices, the number of edges between them cannot deviate substantially from the expectation. Let us now assume, more precisely, that $B_1, B_2 \subset \mathbb{R}^2$ are two small disjoint boxes in $\mathbb{R}^2$. The number of points in $B_1$ is given by $n \cdot \mu(B_1)$, the number of points in $B_2$ is given by $n \cdot \mu(B_2)$. 
Since the underlying graph is Erd\H{o}s-Renyi, we have that the expected number of edges with one vertex in $B_1$ and the other in $B_2$ is
$$
\mathbb{E} ~\sum_{v \in B_1} \sum_{w \in B_2} 1_{(v,w) \in G} =  p n^2 \mu(B_1) \mu(B_2).
$$
For $k$-regular graphs, we use that 
$$\mathbb{E}[1_{(v,w) \in G}] = \frac{kn/2}{\binom{n}{2}} = \frac{k}{n-1}$$
 instead. Since $k/(n-1) \sim p$, this is essentially the same as with Erd\H{o}s-Renyi graphs.
In the next step, we will compute the variance. The variance of a sum of independent random variables is the sum of the variances of each individual random variable. Since the variance of Bernoulli random variables with likelihood $p$ is given by $p(1-p)$, we obtain
$$ 
\mathbb{V} ~\sum_{v \in B_1} \sum_{w \in B_2} 1_{(v,w) \in E} =   p(1-p)n^2 \mu(B_1) \mu(B_2).
$$
From this we get that taking the expectation with respect to all Erd\H{o}s-Renyi random graphs for fixed $\left\{y_1, \dots, y_n\right\} \subset \mathbb{R}^2$ leads to,
$$
\mathbb{E} \sum_{i,j=1 \atop i \neq j}^n p_{ij} \log{(1 + \|y_i - y_j\|^2)} =   2p \binom{n}{2} \int_{\mathbb{R}^2} \int_{\mathbb{R}^2} \log{(1 + \|x - y\|^2)} d \mu(x) d\mu(y).
$$
We recall that when dealing with the expectation $\mathbb{E}$, switching to the integral required taking out self-interactions (resulting in $\mathbb{E}_2$ and an analysis to be done in \S \ref{subsec:pert}). Self interactions do not contribute here since $\log(1 + \|y_i - y_i\|^2) = 0$. By the same approach, we can compute the variance with respect to Erd\H{o}s-Renyi random graphs and arrive at
$$ \mathbb{V} \sum_{i,j=1 \atop i\neq j}^n p_{ij} \log{(1 + \|y_i - y_j\|^2)} =   p(1-p) n^2\int_{\mathbb{R}^2} \int_{\mathbb{R}^2} \log{(1 + \|x - y\|^2)}^2 d \mu(x) d\mu(y).$$
It remains to compute the variance of this term with respect to the model of $k-$regular random graphs. Naturally, we expect this to be very close to the variance for the Erd\H{o}s-Renyi model. The main difference is that the $p_{ij}$ are no longer independent random variables but exhibit a slight negative correlation. We make use of the following Lemma.

\begin{lemma}
Let $\left\{ i, j, k, l\right\}$ denote four different vertices. Then, with respect to all random $k-$regular graphs on all $n$ vertices, there exist two positive quantities $c_{k,n} \sim 1 \sim c_{2,k,n}$ (comparable to a universal constant) such that
$$ \mathbb{E} ~p_{i,j}  p_{i,k}  \sim \frac{k^2}{n^2} -  c_{k,n} \frac{k^2}{n^3}$$
and
$$ \mathbb{E} ~p_{i,j}   p_{k,l} = \frac{k^2}{n^2} -  c_{2,k,n} \frac{k^2}{n^3}.$$
\end{lemma}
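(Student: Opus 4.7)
Both identities follow from the deterministic identity
$$
\sum_{v \neq i} p_{i,v} = k,
$$
which holds at every vertex of every $k$-regular graph, together with the vertex-transitivity of the uniform distribution on $k$-regular graphs. The strategy is to square this identity (or its sum over $i$), exploit that $p_{i,v}^2 = p_{i,v}$ since $p_{i,v} \in \{0,1\}$, and then pass from the resulting aggregate sums to individual moments by symmetry.

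\textbf{First identity.} Squaring the degree identity at a fixed $i$ gives
$$
k^2 = \sum_{v \neq i} p_{i,v}^2 \;+\; \sum_{\substack{v, w \neq i \\ v \neq w}} p_{i,v}\, p_{i,w} \;=\; k \;+\; \sum_{\substack{v, w \neq i \\ v \neq w}} p_{i,v}\, p_{i,w},
$$
so the double sum equals $k(k-1)$ \emph{deterministically}. Vertex-transitivity makes $\mathbb{E}[p_{i,v}\,p_{i,w}]$ identical for every triple of distinct vertices, and dividing by the $(n-1)(n-2)$ ordered pairs gives
$$
\mathbb{E}\bigl[p_{i,j}\, p_{i,k}\bigr] \;=\; \frac{k(k-1)}{(n-1)(n-2)}.
$$
A Taylor expansion in $1/n$ (with $k/n$ of order one) rewrites this in the form $\frac{k^2}{n^2} - c_{k,n}\frac{k^2}{n^3}$ for a bounded $c_{k,n}$.

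\textbf{Second identity.} Summing the degree identity over $i$ yields $\sum_{i \neq j} p_{i,j} = kn$ (each edge counted twice), again deterministically. Squaring and decomposing the double sum by $|\{i,j\}\cap\{k,l\}|$ produces
$$
k^2 n^2 \;=\; 2kn \;+\; 4kn(k-1) \;+\; n(n-1)(n-2)(n-3)\cdot \mathbb{E}\bigl[p_{i,j}\, p_{k,l}\bigr],
$$
corresponding to the same-edge case (using $p^2 = p$); the one-shared-vertex case (four matchings of the shared vertex, contributing $\sum_{(i,j)} p_{i,j}(k - p_{i,j}) = kn(k-1)$ each by the first identity); and the four-distinct case, whose single unknown moment is extracted by vertex-transitivity. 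Solving,
$$
\mathbb{E}\bigl[p_{i,j}\, p_{k,l}\bigr] \;=\; \frac{k(kn - 4k + 2)}{(n-1)(n-2)(n-3)},
$$
which Taylor-expands in $1/n$ to $\frac{k^2}{n^2} - c_{2,k,n}\frac{k^2}{n^3}$ with $c_{2,k,n}$ bounded.

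\textbf{Main obstacle.} There is no real conceptual difficulty: both moments are computed exactly, and the asymptotic expansions are elementary. The only care required is the combinatorial bookkeeping for the second identity—correctly handling ordered versus unordered pairs in the three-case decomposition, and verifying that the ``one shared vertex'' contribution reduces cleanly (via the first identity) to $4kn(k-1)$. Notably, the argument uses only the deterministic degree constraint and vertex-transitivity; no deeper probabilistic feature of random $k$-regular graphs enters, so the same derivation applies to any vertex-transitive distribution on $k$-regular graphs.
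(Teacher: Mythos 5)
Your proposal is correct and reaches exactly the same closed-form answers as the paper, namely $\mathbb{E}[p_{i,j}p_{i,k}]=\tfrac{k(k-1)}{(n-1)(n-2)}$ and $\mathbb{E}[p_{i,j}p_{k,l}]=\tfrac{k(kn-4k+2)}{(n-1)(n-2)(n-3)}$, but by a genuinely different route. The paper argues probabilistically: for the adjacent pair it counts the ways of choosing $k$ neighbors of $i$ out of $n-1$ with two of them prescribed, and for the disjoint pair it conditions on $p_{i,j}=1$, observes that the subgraph induced on $V\setminus\{i,j\}$ then has exactly $nk/2-2k+1$ edges, and divides by $\binom{n-2}{2}$ using exchangeability of the remaining vertices. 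You instead square the deterministic degree identities $\sum_{v\neq i}p_{i,v}=k$ and $\sum_{i\neq j}p_{i,j}=kn$, use $p^2=p$, and extract the individual moments from the resulting aggregate identities by exchangeability under the $S_n$-action (which is the relevant symmetry here, rather than vertex-transitivity of any particular graph). Your bookkeeping checks out: the overlap-two case contributes $2kn$, the one-shared-vertex case $4kn(k-1)$, and solving gives the stated fraction. What your approach buys is that it avoids conditioning altogether and makes transparent that only the degree constraint plus permutation-invariance of the law is used; the paper's conditional argument is shorter for the first moment but requires the (correct but unremarked) observation that the edge count inside $V\setminus\{i,j\}$ is deterministic given $p_{i,j}=1$. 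One small point worth noting: your hedge that $c_{k,n}$ is merely \emph{bounded} is actually more accurate than the lemma's claim of positivity, since expanding $\tfrac{k(kn-4k+2)}{(n-1)(n-2)(n-3)}$ gives $\tfrac{k^2}{n^2}+\tfrac{2k^2}{n^3}+\dots$, i.e.\ the disjoint-pair correction has the opposite sign from the one asserted (and the paper's own $\sim\tfrac{k^2}{n^2}-4\tfrac{k^2}{n^3}$); this does not affect the way the lemma is used downstream, where only the $\mathcal{O}(k^2/n^3)$ magnitude of the deviation matters.
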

\begin{proof} We start with the first case. This is simply asking about the likelihood that two fixed edges $(i,j)$ and $(i,k)$ emanating from the same vertex both end up in a random $k-$regular graph. Since everything is invariant under relabeling and the product of two indicator functions is only 1 when both are 1
$$ \mathbb{E} ~p_{i,j}  p_{i,k} = \frac{k}{n-1} \frac{k-1}{n-2}.$$
An alternative argument would proceed as follows: the ways of choosing $k$ elements out of $n-1$ with 2 elements being fixed is given by
$$ \frac{\binom{n-3}{k-2}}{\binom{n-1}{k}} = \frac{\frac{(n-3)!}{ (n-k)! (k-2)!}}{ \frac{(n-1)!}{k! (n-1 -k)!}} = \frac{k (k-1)}{(n-1) (n-2)}.$$
Let us now consider the likelihood of two disjoint edges $p_{i,j} p_{k,l}$ being contained in the random graph. We use, since these are indicator functions,
$$ \mathbb{E} ~p_{i,j}   p_{k,l}  = \mathbb{P}( p_{i,j} p_{k,l} = 1) = \mathbb{P}\left( p_{k,l} = 1 \big| p_{i,j}=1\right) \cdot \mathbb{P}(p_{i,j} = 1).$$
We have $ \mathbb{P}(p_{i,j} = 1) = k/(n-1)$, it remains to understand the conditional expectation. Suppose that $p_{i,j} = 1$. 

\begin{center}
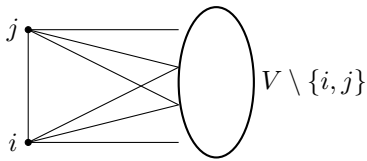
\begin{figure}[h!]
\begin{tikzpicture}
\filldraw (0,0) circle (0.04cm);
\filldraw (0,1.5) circle (0.04cm);
\foreach \x in {0,1,2}
      {
        \draw (0,0) -- (2, 0.5*\x);
                \draw (0,1.5) -- (2, 1.5-0.5*\x);
      }
\node at (-0.2, 0) {$i$};      
\node at (-0.2, 1.5) {$j$};      
\draw [thick] (2.5,0.8) ellipse (0.5cm and 1cm);
\draw (-0, 0) -- (-0, 1.5);
\node at (3.8, 0.8) {$V \setminus \left\{i,j\right\}$};
\end{tikzpicture}
\caption{Sketch of the Argument.}
\end{figure}
\end{center}
\vspace{-10pt}
The symmetry of the $k-$random regular graphs allows us to always relabel vertices in the complement. The likelihood of $p_{k,l} = 1$ subject to $p_{i,j} =1$ is thus simply determined by the total number of edges between vertices in $V \setminus \left\{i, j\right\}$. There are $n \cdot k$ edges in total. Of those $k-1$ connect $i$ and $V \setminus \left\{i, j\right\}$ and $k-1$ connect $j$ and $V \setminus \left\{i, j\right\}$. Thus the subgraph induced by the vertices $V \setminus \left\{i,j\right\}$ has $nk/2 - 2k + 1$ edges. The likelihood of any random pair of vertices being connected is thus
$$ \mathbb{P}\left(p_{k,l} \big| p_{i,j} = 1\right) = \frac{nk/2 - 2k + 1}{\binom{n-2}{2}}.$$
Thus
$$ \mathbb{E} p_{i,j} p_{k,\ell} = \frac{k}{n-1} \frac{nk/2 - 2k + 1}{\binom{n-2}{2}} \sim \frac{k^2}{n^2} - 4 \frac{k^2}{n^3}.$$
\end{proof}

These expectations $ \mathbb{E} ~p_{i,j}  p_{i,k}$ and $ \mathbb{E} ~p_{i,j}   p_{k,l} $ are very close to what one would expect for independently chosen edges. This suggests that our computation of the variance assuming the Erd\H{o}s-Renyi model should be close to the truth and indeed it is.

\begin{lemma} Assuming this type of correlation structure, for arbitrary $x_{i,j} \in \mathbb{R}$,
$$ \mathbb{V} \sum_{i,j} p_{i,j} x_{i,j}  \geq  \frac{k}{n} \left(1 - \frac{k}{n}\right) \sum_{i,j} x_{i,j}^2 - c_{k,n} \frac{k^2}{n^3} \left( \sum_{i,j}  x_{i,j}\right)^2.$$
\end{lemma}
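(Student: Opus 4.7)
The plan is to expand the variance explicitly, partition the resulting double sum according to the overlap pattern of the two index pairs $(i,j)$ and $(k,\ell)$, and apply the covariance estimates coming from the previous lemma.

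First I would write
\[
\mathbb{V}\sum_{i,j} p_{i,j} x_{i,j} = \sum_{i,j} x_{i,j}^2 \,\mathbb{V}(p_{i,j}) + \sum_{(i,j)\neq (k,\ell)} x_{i,j} x_{k,\ell}\, \mathrm{Cov}(p_{i,j},p_{k,\ell}),
\]
so the lower bound on $\mathbb{V}$ reduces to (i) a diagonal contribution and (ii) an off-diagonal contribution. For the diagonal: since $p_{i,j}$ is a Bernoulli indicator with $\mathbb{E} p_{i,j} = k/(n-1)$, one gets $\mathbb{V}(p_{i,j}) = \tfrac{k}{n-1}(1 - \tfrac{k}{n-1}) \geq \tfrac{k}{n}(1 - \tfrac{k}{n})$ (up to absorbable lower-order errors), which produces exactly the first term in the statement.

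For the off-diagonal sum I would split the index pairs into the two configurations considered in the preceding lemma: pairs of edges sharing a vertex and pairs of disjoint edges. In both cases the previous lemma gives $\mathbb{E} p_{i,j} p_{k,\ell} = \tfrac{k^2}{n^2} - c\, \tfrac{k^2}{n^3}$ (with $c$ one of the two positive universal constants), and since $\mathbb{E} p_{i,j}\cdot \mathbb{E} p_{k,\ell} \approx \tfrac{k^2}{n^2}$, the resulting covariance satisfies
\[
\mathrm{Cov}(p_{i,j},p_{k,\ell}) \geq -C_{k,n}\,\frac{k^2}{n^3}
\]
for some $C_{k,n}\sim 1$ uniform in $i,j,k,\ell$. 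Consequently
\[
\sum_{(i,j)\neq(k,\ell)} x_{i,j}x_{k,\ell}\, \mathrm{Cov}(p_{i,j},p_{k,\ell}) \geq -C_{k,n}\frac{k^2}{n^3} \sum_{(i,j)\neq(k,\ell)} x_{i,j}x_{k,\ell}.
\]
The double sum on the right telescopes: it equals $\bigl(\sum_{i,j} x_{i,j}\bigr)^2 - \sum_{i,j} x_{i,j}^2$, which is bounded above by $\bigl(\sum x_{i,j}\bigr)^2$. Putting the two pieces together gives the claimed inequality (with $c_{k,n}$ absorbing $C_{k,n}$).

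The main subtlety, and the only place where care is required, is the sign management in the off-diagonal bound: the covariance bound is one-sided (a lower bound on a negative quantity), so one must ensure that replacing $\sum_{(i,j)\neq(k,\ell)} x_{i,j}x_{k,\ell}$ by $(\sum x_{i,j})^2$ goes the right way. Since we have $\mathrm{Cov}\le 0$ and we want a \emph{lower} bound on $\mathbb{V}$, we need an \emph{upper} bound on $\sum x_{i,j}x_{k,\ell}$; dropping the negative term $-\sum x_{i,j}^2$ gives $\sum_{(i,j)\neq(k,\ell)} x_{i,j}x_{k,\ell} \le (\sum x_{i,j})^2$, which is exactly what is needed. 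The rest is bookkeeping with the approximations $k/(n-1)\approx k/n$ and the universal nature of $c_{k,n}$.
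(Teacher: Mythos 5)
Your proposal follows essentially the same route as the paper: expand the variance into a diagonal Bernoulli contribution and an off-diagonal covariance contribution, feed in the pair estimates from the preceding lemma, and use $\sum_{(i,j)\neq(k,\ell)}x_{i,j}x_{k,\ell}=\bigl(\sum_{i,j} x_{i,j}\bigr)^2-\sum_{i,j} x_{i,j}^2\leq\bigl(\sum_{i,j} x_{i,j}\bigr)^2$. The only caveat is that for sign-indefinite $x_{i,j}$ your term-by-term step needs the covariance to be (approximately) \emph{equal} to $-c_{k,n}k^2/n^3$ uniformly over pairs rather than merely bounded below by it -- which is what the preceding lemma actually supplies and what the paper's own (equally informal) proof implicitly uses.
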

\begin{proof} We have $\mathbb{V} X = \mathbb{E} X^2 - \left( \mathbb{E} X \right)^2$. Assuming the $p_{i,j}$ are independent variables that are 1 with likelihood $k/n$ and $0$ with likelihood $1-k/n$, we see
$$ \mathbb{V} \sum_{i,j} p_{i,j} x_{i,j} = \sum_{i,j} x_{i,j}^2 \mathbb{V} p_{i,j} = \frac{k}{n} \left(1 - \frac{k}{n}\right) \sum_{i,j} x_{i,j}^2.$$
Let us now assume that they are not independent but almost independent in the sense above. Then
\begin{align*}
 \mathbb{E} \left( \sum_{i,j} p_{i,j} x_{i,j} \right)^2 &= \mathbb{E} \sum_{i_1, i_2, j_1, j_2}  p_{i_1,j_2} x_{i_1,j_2}  p_{i_2,j_2} x_{i_2,j_2} \\
 &= \sum_{i_1, i_2, j_1, j_2}  x_{i_1,j_2} x_{i_2,j_2} \mathbb{E} p_{i_1,j_2} p_{i_2,j_2}.
 \end{align*}
 This leads to exactly the same quantity as above except for an additional error term of size
 $$ \sim - c_{k,n} \frac{k^2}{n^3} \left( \sum_{i,j}  x_{i,j}\right)^2.$$
\end{proof}
A simple computation shows that we expect the constant to scale roughly like $c_{k,n} \sim 4$.
Combining all these ingredients, we expect the variance of the second term with respect to random $k-$regular graphs to be given by
\begin{align*}
 \mathbb{V} \sum_{i,j=1 \atop i\neq j}^n p_{ij} \log{(1 + \|y_i - y_j\|^2)} &\sim   p(1-p) n^2\int_{\mathbb{R}^2 \times \mathbb{R}^2} \log{(1 + \|x - y\|^2)}^2 d \mu(x) d\mu(y)\\
 &- c_{k,n} p(1-p) n \left(\int_{\mathbb{R}^2 \times \mathbb{R}^2}  \log{(1 + \|x - y\|^2)}d \mu(x) d\mu(y) \right)^2.
\end{align*}
The second term in this approximation is indeed a lower order perturbation as $n$ becomes large. The first term is exactly what is predicted by the Erd\H{o}s-Renyi random model.

\subsection{Shrinkage.} \label{sec:shrinkage} Taking the leading terms derived in the prior section, we have (up to lower order terms)
\begin{align*}
 \mathbb{E} ~\mbox{t-SNE loss} &= 2p \binom{n}{2} \log{(n^2)}  + 2p  \binom{n}{2}  \log{\left( \int_{\mathbb{R}^2} \int_{\mathbb{R}^2} \frac{ d\mu(x) d\mu(y)}{1+\|x-y\|^2}   \right)} \\
 &+ 2p \binom{n}{2}\int_{\mathbb{R}^2} \int_{\mathbb{R}^2} \log{(1 + \|x - y\|^2)} d \mu(x) d\mu(y).
 \end{align*}
The constant is the same in front of all three terms. Moreover, the first term is constant, depending only on $n$ and $p$ and thus irrelevant for the study of minimizing configurations.
When studying the minimizer, it thus suffices to consider the rescaled functional
$$ I(\mu) =  \log{\left( \int_{\mathbb{R}^2} \int_{\mathbb{R}^2} \frac{ d\mu(x) d\mu(y)}{1+\|x-y\|^2}   \right)} + \int_{\mathbb{R}^2} \int_{\mathbb{R}^2} \log{(1 + \|x - y\|^2)} d \mu(x) d\mu(y).$$
The logarithm is concave and thus we have by Jensen's inequality that
 $$  \log{\left( \int_{\mathbb{R}^2} \int_{\mathbb{R}^2} \frac{ d\mu(x) d\mu(y)}{1+\|x-y\|^2}   \right)} \geq   
 \int_{\mathbb{R}^2} \int_{\mathbb{R}^2} \log \left( \frac{ 1}{1+\|x-y\|^2}   \right) d\mu(x) d\mu(y)$$
from which we deduce 
$ I(\mu) \geq 0$
with equality if and only if all the mass is concentrated in one point, i.e. $\mu = \delta_{x_0}$ for some $x_0 \in \mathbb{R}^2$. This already illustrates part of the dynamics that plays out: the expected term (not considering lower order perturbations) has a Jensen-type structure and forces the measure to contract -- this is counter-balanced by quantities coming from the variance and the interactions between these two terms leads to final estimates about the scale of the measure.
We quickly establish a quantitative result that essentially states that if $\mu$ is spread out over a small scale (in a certain sense), then $I(\mu)$ has to be strictly bigger than 0. (The proof will also show that if $\mu$ is spread out over a large area, then much stronger estimates holds, we will not concentrate on that part). We define a length-scale $r(\mu)$ of any probability measure $\mu$ on $\mathbb{R}^2$ via
$$ r(\mu) = \inf \left\{ \mbox{sidelength}(Q): Q \subset \mathbb{R}^2, Q~\mbox{square}, ~\mu(Q) \geq \frac{1}{200} \right\}.$$
If $r(\mu) = 0$ (something that happens if $99.5\%$ of all mass is concentrated in a point, for example, which is not something that we would expect for the minimizers of t-SNE energy), then the following Lemma does not say anything interesting but one can simply replace $1/200$ by $0 \leq \delta \ll 1$ and rerun the argument. In the interest of clarity of exposition, we fix $\delta = 1/200$ in the definition of $r(\mu)$.
\begin{lemma}  Let $\mu$ be a probability measure on $\mathbb{R}^2$. Then, for some universal $c>0$,
$$ I(\mu) \geq  c\frac{r(\mu)^4}{(1+ r(\mu)^2)^4}.$$
\end{lemma}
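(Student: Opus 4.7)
The strategy is to combine a quantitative Jensen inequality with an anti-concentration argument tied to $r(\mu)$. The first step reduces the lemma to a lower bound on $\mathrm{Var}_{\mu\otimes\mu}(f)$ where $f := (1+\|x-y\|^2)^{-1} \in (0,1]$. Writing $m := \int\!\!\int f\,d\mu\,d\mu$, the Taylor formula for $\log$ at $m$ has remainder at least $(f-m)^2/(2\max(f,m)^2)$; since $\max(f,m)\leq 1$, integrating yields
\[
I(\mu) \;=\; \log m - \int\!\!\int \log f\,d\mu\,d\mu \;\geq\; \tfrac12\,\mathrm{Var}_{\mu\otimes\mu}(f).
\]

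Next, let $Q^*$ be a square of sidelength $r=r(\mu)$ with $M := \mu(Q^*) \geq 1/200$, and set $\tilde{\mu} := \mu|_{Q^*}/M$. Using $\mathrm{Var}(f) = \tfrac12\,\mathbb{E}[(f_1-f_2)^2]$ over independent pairs $(x_i,y_i)$ and restricting both pairs to $Q^*\times Q^*$ (probability $\geq M^4$), one has $T_i := \|x_i-y_i\|^2 \leq 2r^2$ and the mean-value estimate $(f_1-f_2)^2 \geq (T_1-T_2)^2/(1+2r^2)^4$. This reduces the task to bounding $\mathrm{Var}_{\tilde{\mu}\otimes\tilde{\mu}}(\|x-y\|^2)$ from below. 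Centering $\tilde{\mu}$ at its centroid $\bar{x}$ and expanding $\|x-y\|^2 = \|x\|^2 + \|y\|^2 - 2\,x\cdot y$, independence gives
\[
\mathrm{Var}_{\tilde{\mu}\otimes\tilde{\mu}}(\|x-y\|^2) \;\geq\; 4\,\mathbb{E}[(x\cdot y)^2] \;=\; 4\,\|\tilde{\Sigma}\|_F^2 \;\geq\; 2\tilde{\sigma}^4,
\]
where $\tilde{\Sigma}$ is the covariance of $\tilde{\mu}$, $\tilde{\sigma}^2 = \mathrm{tr}(\tilde{\Sigma})$, and the last step uses Cauchy--Schwarz on the singular values in $\mathbb{R}^2$.

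It remains to show $\tilde{\sigma} \geq c\,r$ for a universal $c > 0$. The defining property $r(\mu)=r$ gives $\mu(Q') < 1/200$ for every square $Q'$ of sidelength $<r$, hence $\tilde{\mu}(Q') < 1/(200M)$. Chebyshev's inequality gives $\tilde{\mu}(B(\bar{x}, k\tilde{\sigma})) \geq 1 - k^{-2}$, and this ball sits inside a square of sidelength $2k\tilde{\sigma}$; choosing $k$ with $1 - k^{-2} > 1/(200M)$ forces $2k\tilde{\sigma} \geq r$, i.e., $\tilde{\sigma} \geq c\,r$. Combining the three stages and using $(1+2r^2)^4 \leq 16(1+r^2)^4$ yields the stated bound $I(\mu) \geq c'\,r^4/(1+r^2)^4$. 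The delicate point is the boundary case $M = 1/200$, where $1/(200M) = 1$ makes the Chebyshev constraint vacuous; this is resolved by noting that an atom of mass $\geq 1/200$ would force $r(\mu) = 0$, so $\tilde{\mu}$ cannot be (nearly) a Dirac and must place mass near the boundary of $Q^*$, reviving $\tilde{\sigma} \gtrsim r$ (alternatively one enlarges $Q^*$ slightly to gain slack $M > 1/200$).
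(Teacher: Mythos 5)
Your first step is fine and is essentially the paper's: the pointwise Taylor bound $\log m-\log f\geq -\tfrac{f-m}{m}+\tfrac{(f-m)^2}{2}$ for $f,m\in(0,1]$ integrates to $I(\mu)\geq\tfrac12\mathrm{Var}_{\mu\otimes\mu}(f)$, which is exactly the sharpened Jensen inequality the paper quotes from Liao--Berg. The gap is in how you lower-bound the variance. You extract all of it from pairs inside $Q^*$, i.e.\ from $\mathrm{Var}_{\tilde\mu\otimes\tilde\mu}(\|x-y\|^2)$ with $\tilde\mu=\mu|_{Q^*}/M$, and this quantity can be essentially zero even when $r(\mu)=r>0$. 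Concretely, take $\mu=(1/200-\epsilon)\delta_{z_0}+(\text{diffuse mass spread over a huge region})$: the sidelength needed to collect an extra $\epsilon$ of diffuse mass around $z_0$ can be made equal to any prescribed $r$, and then $\tilde\mu$ is $(1-200\epsilon)\delta_{z_0}$ plus a sliver, so $\tilde\sigma\ll r$. The definition of $r(\mu)$ only tells you $\mu(Q')<1/200$ for sub-squares $Q'$ of sidelength $<r$, and since $M$ itself can be as small as $1/200$, this gives no nontrivial control on the conditional measure $\tilde\mu$. Your two proposed patches do not repair this: an atom of mass $1/200-\epsilon$ does \emph{not} force $r(\mu)=0$ (only mass $\geq 1/200$ would), and enlarging $Q^*$ need not gain any appreciable slack in $M$.

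The fix -- and the route the paper takes -- is to use mass \emph{outside} $Q^*$ rather than inside it. By the quartering argument, every square of sidelength $r(\mu)$ carries mass at most $1/50$, so the $5\times 5$ block of such squares centered at $Q_r$ carries at most $1/2$ of the total mass; hence at least half the mass lies at distance $\geq 2r(\mu)$ from every point of $Q_r$. Comparing the event that both $X,Y$ lie in $Q_r$ (so $\|X-Y\|\leq\sqrt2\,r$, probability $\geq(1/200)^2$) with the event that $X\in Q_r$ and $Y$ is outside the block (so $\|X-Y\|\geq 2r$, probability $\geq(1/200)\cdot\tfrac12$) shows $Z=(1+\|X-Y\|^2)^{-1}$ takes values separated by $\gtrsim r^2/(1+r^2)^2$ on two events of fixed positive probability, giving $\mathbb{V}Z\gtrsim r^4/(1+r^2)^4$. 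Your mean-value and covariance manipulations inside $Q^*$ are individually correct, but the anchor $\tilde\sigma\gtrsim r$ they all rest on is false in general, so the argument as written does not close.
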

We note that we are only interested in the case when the measure is already quite concentrated, i.e. $r(\mu)$ small. In that case, the lower bound is really $\gtrsim r(\mu)^4$ and, before proving the statement, we quickly show that it has the sharp scaling. Let $\mu$ be the sum of two Dirac measures, each having mass $1/2$ and being at distance $r$ from each other. Then a quick computation shows that
$$ I(\mu) = \log\left(\frac{1}{2} + \frac{1}{2} \frac{1}{1+r^2}\right) + \frac{1}{2} \log{(1+r^2)} \sim \frac{r^4}{8} + \mbox{l.o.t.} \qquad \mbox{as}~r \rightarrow 0.$$

\begin{proof} We start by using a refined version of Jensen's inequality (see \cite{liao}). If $\nu$ is a probability measure supported on $[0,1]$ and $Z$ is a random variable following that distribution, then
$$ \log \left( \mathbb{E} Z \right) - \int_{0}^{1} \log{(x)} d\nu(x) \geq \frac{1}{2} \mathbb{V} Z.$$
As will come as no surprise, this is a consequence of the strict uniform bound on the second derivative of $\log$ in the interval $(0,1)$: stronger results would be available if $\mathbb{E}Z$ is close to 0 (see \cite{liao}) but we are interested in the case where $\mathbb{E}Z$ is fairly close to 1. Let us now return to our original setting: given a measure $\mu$, we will consider the random variable
$$ Z = \frac{1}{1+\|X-Y\|^2}, ~\mbox{where} \quad X \sim \mu \sim Y$$
are two independent realizations of $\mu$. Then
$$ I(\mu) =  \log \left( \mathbb{E} Z \right) -  \mathbb{E} \log{Z}.$$
We can now introduce the induced measure $\nu$ describing the distribution of $Z$ in the unit interval via
$$ \nu(A) = \int_{\mathbb{R}^2} \int_{\mathbb{R}^2} 1_{\frac{1}{1+\|x-y\|^2} \in A} ~ d\mu(x) d\mu(y).$$
Appealing to the strenghtened Jensen inequality, we deduce
$$ 2 \cdot I(\mu) \geq \mathbb{V} Z.$$

\begin{center}
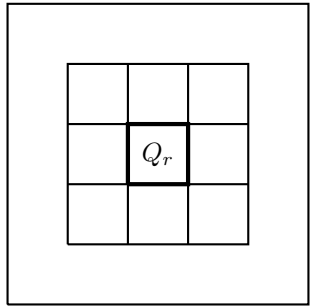
\begin{figure}[h!]
\begin{tikzpicture}[scale=0.8]
\draw [ultra thick] (0,0) -- (1,0) -- (1,1) -- (0,1) -- (0,0);
\draw [thick] (-1, -1) -- (2,-1) -- (2,2) -- (-1, 2) -- (-1, -1);
\draw [thick] (-1, 0) -- (2, 0);
\draw [thick] (-1, 1) -- (2, 1);
\draw [thick] (0, -1) -- (0, 2);
\draw [thick] (1, -1) -- (1, 2);
\draw [thick] (-2, -2) -- (3, -2) -- (3,3) -- (-2,3) -- (-2,-2);
\node at (0.5, 0.5) {$Q_r$};
\end{tikzpicture}
\caption{The densest square $Q_r$.}
\end{figure}
\end{center}
It remains to understand how the variance of $Z$ depends on the distribution properties of $\mu$: more precisely, what we want to show is that if $\mu$ is not concentrated around a single point and $X \sim \mu \sim Y$, then the random variable
$$ Z = \frac{1}{1 + \|X-Y\|^2} \qquad \mbox{cannot be concentrated around a single value.}$$ 
This, in turn, is equivalent to showing that $\|X - Y\|$ cannot be concentrated around a single value. This is where our notion of $r(\mu)$ comes into play. We first assume that the smallest square containing $1/200$ of the total mass has positive side-length $r(\mu)$. Let us call this smallest square $Q_r$.
We note that $\mu(Q_r) \leq 1/50$: if it were larger, then we could subdivide $Q_r$ into four smaller squares at least one of which will have measure $1/200$ which is a contradiction. We take the $24 = 5^2 - 1$ squares of equal sidelength surrounding $Q_r$: since all of these squares have total measure less than $1/50$, this means that at least half the measure is outside this $5 \times 5$ box and thus at least distance $2r(\mu)$ from any point in $Q_r$. This shows that 
$$ \mathbb{P} \left( \|X - Y \| \leq \sqrt{2}  r(\mu)\right) \geq \frac{1}{200} \cdot \frac{1}{200} \quad \mbox{and} \quad  \mathbb{P} \left( \|X - Y \| \geq 2  r(\mu) \right) \geq \frac{1}{200} \cdot \frac{1}{2}.$$
This proves that
$$ \mathbb{P} \left( \frac{1}{1+ \|X - Y \|^2} \geq \frac{1}{1 + 2 r(\mu)^2} \right) \geq \frac{1}{200} \cdot \frac{1}{200}$$
as well as
$$ \mathbb{P} \left( \frac{1}{1+\|X - Y \|^2} \leq \frac{1}{1+4r(\mu)^2}  \right) \geq \frac{1}{200} \cdot \frac{1}{2}.$$
At this point we use the following simple Lemma: if $a<b$ and $\mathbb{P}(X \leq a) \geq c_1$ and $\mathbb{P}(X \geq b) \geq c_2$, then $\mathbb{V} X \gtrsim (b-a)^2,$ where the implicit constant depends on $c_1$ and $c_2$.
This shows that, for some implicit universal constant, 
$$ \mathbb{V} Z \gtrsim \frac{r(\mu)^4}{(1+ r(\mu)^2)^4}.$$
\end{proof}

\textit{A Curious Problem.} We quickly note the following curious problem that arises naturally in the context of the Lemma. Let $\Omega \subset \mathbb{R}^n$ be given and let $X_1,X_2$ be two independent random variables sampled uniformly from $\Omega$, i.e. for each $A \subset \mathbb{R}^n$
$$ \mathbb{P}\left(X_i \in A\right) = \frac{|A \cap \Omega|}{|\Omega|}.$$
It is an interesting question to understand the behavior of $\mathbb{E} \| X - Y\|$, especially the question of how small this expectation can be. It is known \cite{blaschke, pfiefer} that $$\mathbb{E} \|X - Y\| \geq c_d |\Omega|^{1/d}$$ and that the extremal case is given by the ball (in light of the Riesz Rearrangement Inequality, this is perhaps not too surprising). We also refer to the more recent results \cite{thal, burg}. 
One could naturally ask whether there is an analogous inequality for the variance, i.e. $$\mathbb{V} \|X-Y\| \geq c_d \cdot |\Omega|^{2/d}$$ and whether it is possible to identify the optimal shape. Is it again a ball?

\subsection{Taylor Expansions.} What we have seen in the preceding section is that the main parts of the t-SNE energy functional (assuming a random underlying graph) will ultimately lead to a shrinking of the cluster down to a small area in space. This allows us to further simplify all the functionals by replacing them with their Taylor expansions. We have three terms (two expectations and one variance). 

We recall that one of the expectations, $\mathbb{E}_2$, is a lower order perturbation of the true expectation $\mathbb{E}$. We will first perform a Taylor expansion of the algebraically simpler quantity $\mathbb{E}_2$ before showing, in \S \ref{subsec:pert} that the difference between $\mathbb{E}$ and $\mathbb{E}_2$ can be absorbed in already existing lower order terms.

\subsubsection{Expectations.} We start by
analyzing the expectations, i.e.
$$  \log{\left( \int_{\mathbb{R}^2} \int_{\mathbb{R}^2} \frac{ d\mu(x) d\mu(y)}{1+\|x-y\|^2}   \right)}  \quad \mbox{and} \quad \int_{\mathbb{R}^2} \int_{\mathbb{R}^2} \log{(1 + \|x - y\|^2)} d \mu(x) d\mu(y)$$
under the assumption that all the mass is contained in a ball of radius $r$ centered around a fixed point (due to the translation invariance of these functionals, it does not matter where that point is).
For the first term, since $\mu$ is a probability measure, we have
$$ \int_{\mathbb{R}^2} \int_{\mathbb{R}^2} \frac{ d\mu(x) d\mu(y)}{1+\|x-y\|^2}   = 1 - \int_{\mathbb{R}^2} \int_{\mathbb{R}^2} \frac{ \|x-y\|^2}{1+\|x-y\|^2}    d\mu(x) d\mu(y)$$
which we expect to be $\mathcal{O}(r^2)$ close to 1. Using the Taylor expansion of the logarithm around 1
$$ \log{(1+x)} = x - \frac{x^2}{2} + \frac{x^3}{3} - \frac{x^4}{4} + \dots,$$
we can expand the integral as
\begin{align*}
  \log{\left( \int_{\mathbb{R}^2} \int_{\mathbb{R}^2} \frac{d\mu(x) d\mu(y)}{1+\|x-y\|^2}  \right)}  &=  - \int_{\mathbb{R}^2} \int_{\mathbb{R}^2} \frac{\|x-y\|^2}{1+\|x-y\|^2} d\mu(x) d\mu(y) \\
  & -  \frac{1}{2} \left( \int_{\mathbb{R}^2} \int_{\mathbb{R}^2} \frac{\|x-y\|^2}{1+\|x-y\|^2} d\mu(x) d\mu(y)\right)^2+ \mathcal{O}(r^6).
\end{align*}
We simplify the first integral using
$$  \frac{\|x-y\|^2}{1+\|x-y\|^2} =  \|x-y\|^2 - \|x-y\|^4 + \mathcal{O}(\|x-y\|^6).$$
The second integral is already $\mathcal{O}(r^4)$ and we can thus perform the same simplification. This leads to
\begin{align*}
  \log{\left( \int_{\mathbb{R}^2} \int_{\mathbb{R}^2} \frac{d\mu(x) d\mu(y)}{1+\|x-y\|^2}  \right)}  &=  - \int_{\mathbb{R}^2} \int_{\mathbb{R}^2} \|x-y\|^2 d\mu(x) d\mu(y) \\
  &+ \int_{\mathbb{R}^2} \int_{\mathbb{R}^2} \|x-y\|^4 d\mu(x) d\mu(y) \\
  & -  \frac{1}{2} \left( \int_{\mathbb{R}^2} \int_{\mathbb{R}^2} \|x-y\|^2 d\mu(x) d\mu(y)\right)^2+ \mathcal{O}(r^6).
\end{align*}
For the second expectation, we again use the Taylor expansion of the logarithm leading to
\begin{align*}
\int_{\mathbb{R}^2} \int_{\mathbb{R}^2}  \log{(1+\|x-y\|^2)} d\mu(x) d\mu(y) &= \int_{\mathbb{R}^2} \int_{\mathbb{R}^2}  \|x-y\|^2 d\mu(x) d\mu(y)\\
&- \frac{1}{2} \int_{\mathbb{R}^2} \int_{\mathbb{R}^2}  \|x-y\|^4 d\mu(x) d\mu(y) + \mathcal{O}(r^6).
\end{align*}
Summing both of these terms up, we obtain as the Taylor expansion of the expected t-SNE energy (when averaging over all random graphs)
\begin{align*}
\mathbb{E}_2 &= pn(n-1) \log{(n^2)} + pn \int_{\mathbb{R}^2} \int_{\mathbb{R}^2} \|x-y\|^2 d\mu(x) d\mu(y) \\
&+ \dfrac{pn(n-2)}{2} \int_{\mathbb{R}^2}  \int_{\mathbb{R}^2}   \|x-y\|^4 d\mu(x) d\mu(y) \\
&-  \dfrac{pn(n-1)}{2} \left( \int_{\mathbb{R}^2} \int_{\mathbb{R}^2} \|x-y\|^2 d\mu(x) d\mu(y)\right)^2 + \mathcal{O}(r^6).
\end{align*}
This concludes our expansion of the terms controlling the expectations.

\subsubsection{Variance.} The variance depends on the random graph model. As derived above, in the Erd\H{o}s-Renyi case, we have that the variance of the first term satisfies
 $$ \mathbb{V} = 2p (1-p) \binom{n}{2} \left[ \log{\left(- n + n^2 \int_{\mathbb{R}^2} \int_{\mathbb{R}^2} \frac{d\mu(x) d\mu(y)}{1+\|x-y\|^2}  \right)}\right]^2.$$
 This quantity is a priori, for fixed $\mu$ and $n$ becoming large, an object at scale $\sim_p n^2 (\log{n})^2$ which is larger than the second term. One would expect that this actually tells us something about the size of the integral: presumably it will
 actually be much smaller so that the variance is not quite as large. In fact, one would perhaps believe that the integral is of such a size that the logarithm becomes small, this would suggest
 $$  \int_{\mathbb{R}^2} \int_{\mathbb{R}^2} \frac{d\mu(x) d\mu(y)}{1+\|x-y\|^2}   \sim \frac{1}{n}$$
 which would indicate that $\mu$ is distributed over a scale of size $\sim n^{-1/2}$ which is the scaling we get in the case of $k-$regular random graphs. It is clear that this case presents with some interesting dynamics; it would be desirable to have a better understanding.
However, switching to the case of random $k-$regular graphs, we see that there is only one variance term (the variance of the second term in the energy) and that this term is given by
$$ \mathbb{V} =  p(1-p) n^2 \int_{\mathbb{R}^2} \int_{\mathbb{R}^2} \log{(1 + \|x - y\|^2)}^2 d \mu(x) d\mu(y).$$
A Taylor expansion up to $\mathcal{O}(r^6)$ shows that 
$$ \mathbb{V} =  p(1-p) n^2 \int_{\mathbb{R}^2} \int_{\mathbb{R}^2}  \|x - y\|^4 d \mu(x) d\mu(y) + \mathcal{O}(r^6).$$

\subsubsection{Adding a slight perturbation.} \label{subsec:pert} We will now compare the true expectation of the first term, it being
$$ \mathbb{E} =  2p \binom{n}{2} \log{\left( - n + n^2 \int_{\mathbb{R}^2} \int_{\mathbb{R}^2} \frac{d\mu(x) d\mu(y)}{1+\|x-y\|^2}  \right)}$$
to the algebraically more convenient approximation
$$ \mathbb{E}_2 = 2p \binom{n}{2} \log{\left( n^2 \int_{\mathbb{R}^2} \int_{\mathbb{R}^2} \frac{d\mu(x) d\mu(y)}{1+\|x-y\|^2}  \right)}$$
that we have used up to now. The mean value theorem implies that for $ 0 < |y| \ll x$, we have
$$\log{(x+y)} \sim \log{x} + \frac{y}{x} + \mathcal{O}\left(\frac{y^2}{x^2}\right)$$
and therefore
\begin{align*}
\log{\left( - n + n^2 \int_{\mathbb{R}^2} \int_{\mathbb{R}^2} \frac{d\mu(x) d\mu(y)}{1+\|x-y\|^2}  \right)} &= \log{\left(n^2 \int_{\mathbb{R}^2} \int_{\mathbb{R}^2} \frac{d\mu(x) d\mu(y)}{1+\|x-y\|^2}  \right)} \\
&- \frac{1}{n}  \left( \int_{\mathbb{R}^2} \int_{\mathbb{R}^2} \frac{d\mu(x) d\mu(y)}{1+\|x-y\|^2}  \right)^{-1} + \mbox{l.o.t.}
\end{align*}
It remains to analyze this integral. As before, we can assume that $\mu$ is concentrated at scale $r$ around a single point and use
$$ \int_{\mathbb{R}^2} \int_{\mathbb{R}^2} \frac{ d\mu(x) d\mu(y)}{1+\|x-y\|^2}   = 1 - \int_{\mathbb{R}^2} \int_{\mathbb{R}^2} \frac{ \|x-y\|^2}{1+\|x-y\|^2}    d\mu(x) d\mu(y).$$
The geometric series
$$ \frac{1}{1-x} = 1 + x + x^2 + \dots$$
leads to
\begin{align*}
 \left( \int_{\mathbb{R}^2} \int_{\mathbb{R}^2} \frac{d\mu(x) d\mu(y)}{1+\|x-y\|^2}  \right)^{-1} &= 1 +  \int_{\mathbb{R}^2} \int_{\mathbb{R}^2} \frac{ \|x-y\|^2}{1+\|x-y\|^2}    d\mu(x) d\mu(y) \\
 &+ \left( \int_{\mathbb{R}^2} \int_{\mathbb{R}^2} \frac{ \|x-y\|^2}{1+\|x-y\|^2}    d\mu(x) d\mu(y) \right)^2 + \mathcal{O}(r^6).
\end{align*}
Recalling
$$  \frac{\|x-y\|^2}{1+\|x-y\|^2} =  \|x-y\|^2 - \|x-y\|^4 + \mathcal{O}(\|x-y\|^6)$$
we can simplify these integrals as above and get
\begin{align*}
 \left( \int_{\mathbb{R}^2} \int_{\mathbb{R}^2} \frac{d\mu(x) d\mu(y)}{1+\|x-y\|^2}  \right)^{-1} &= 1 +  \int_{\mathbb{R}^2} \int_{\mathbb{R}^2}  \|x-y\|^2    d\mu(x) d\mu(y) \\
 &-  \int_{\mathbb{R}^2} \int_{\mathbb{R}^2}  \|x-y\|^4    d\mu(x) d\mu(y) \\
 &+ \left( \int_{\mathbb{R}^2} \int_{\mathbb{R}^2}  \|x-y\|^2  d\mu(x) d\mu(y) \right)^2 + \mathcal{O}(r^6).
\end{align*}
Altogether,
\begin{align*}
\mathbb{E} &= \mathbb{E}_2 -  p(n-1)  - p(n-1)  \int_{\mathbb{R}^2} \int_{\mathbb{R}^2}  \|x-y\|^2    d\mu(x) d\mu(y) + \mbox{l.o.t.} \\
&+  p(n-1)  \int_{\mathbb{R}^2} \int_{\mathbb{R}^2} \|x-y\|^4    d\mu(x) d\mu(y) - p(n-1) \left( \int_{\mathbb{R}^2} \int_{\mathbb{R}^2}  \|x-y\|^2  d\mu(x) d\mu(y) \right)^2 
\end{align*}
We note that, at the scale that we consider, only the first line will be relevant: the relevant terms in $\mathbb{E}, \mathbb{E}_2$ are at scale $\sim n^2 r^4$ which may be comparable to $\sim n r^2$ but for which $\sim n r^4$ is a lower order term.

\subsubsection{Conclusion.} This completes our Taylor expansion, we can now collect all the relevant terms for the Taylor expansion of the t-SNE energy with respect to a random $k-$regular graph up to leading order. For the expectation, we have our expansion for $\mathbb{E}_2$ and the correction term from the preceding section. After some simplification, we arrive at
\begin{align*}
\mathbb{E}~\mbox{t-SNE energy} &= pn(n-1) \log{(n^2)}  -  p(n-1)  \\
&+ p \frac{n^2 - 2}{2} \int_{\mathbb{R}^2} \int_{\mathbb{R}^2}   \|x-y\|^4 d\mu(x) d\mu(y) \\
&-  p \frac{(n-1)(n+2)}{2} \left( \int_{\mathbb{R}^2} \int_{\mathbb{R}^2} \|x-y\|^2 d\mu(x) d\mu(y)\right)^2 \\
& + p \int_{\mathbb{R}^2} \int_{\mathbb{R}^2}  \|x-y\|^2    d\mu(x) d\mu(y) + \mbox{l.o.t.}
\end{align*}
We see that there are some constants depending only on $n, p$, there are two terms with the same pre-factor that emulate the dominant Jensen-functional structure that we have already encountered above and there is a lower order perturbation. 
As for the variance, we recall that
$$
\mathbb{V} =  p(1-p)n^2\int_{\mathbb{R}^2} \int_{\mathbb{R}^2}  \|x - y\|^4 d \mu(x) d\mu(y) + \mbox{l.o.t.}.
$$
Having identified expectation and variance, the first approximation is naturally given by
$$ X \sim \mathbb{E}X \pm \sigma \sqrt{\mathbb{V} X},$$
where $\sigma$ is a random variable at scale $\sim 1$ (and, in many settings, one would expect it to be approximately Gaussian). 
This is exactly the ansatz that we chose for our functional. Ignoring the constants (which have no impact on the structure of the minimizer), dividing by $\sim pn^2 /2$ and absorbing some universal constants depending only on $p$ in the scaling of $\sigma$, we see that the ansatz leads to
\begin{align*}
J_{\sigma, \delta}(\mu) &= \int_{\mathbb{R}^2} \int_{\mathbb{R}^2}   \|x-y\|^4 d\mu(x) d\mu(y) - \left( \int_{\mathbb{R}^2} \int_{\mathbb{R}^2} \|x-y\|^2 d\mu(x) d\mu(y)\right)^2\\
&+  \frac{\sigma}{ \sqrt{p}} \delta \left( \int_{\mathbb{R}^2} \int_{\mathbb{R}^2}  \|x - y\|^4 d \mu(x) d\mu(y)\right)^{1/2},
\end{align*}
where $\delta \sim 1/n$.
We first note that this functional has a scaling symmetry.
If we replace the measure $\mu$ by the rescaled measure $\mu_{\lambda}$ (defined in the canonical way: $\mu_{\lambda}(A) = \mu(\lambda^{-1} A)),$ then we see that
\begin{align*}
\int_{\mathbb{R}^2} \int_{\mathbb{R}^2}   \|x-y\|^4 d\mu_{\lambda}(x) d\mu_{\lambda}(y)  &= \lambda^4 \int_{\mathbb{R}^2} \int_{\mathbb{R}^2}   \|x-y\|^4 d\mu(x) d\mu(y) \\
 \left( \int_{\mathbb{R}^2} \int_{\mathbb{R}^2} \|x-y\|^2 d\mu_{\lambda}(x) d\mu_{\lambda}(y)\right)^2 &= \lambda^4 \left( \int_{\mathbb{R}^2} \int_{\mathbb{R}^2} \|x-y\|^2 d\mu(x) d\mu(y)\right)^2 \\
  \left(\int_{\mathbb{R}^2} \|x - y\|^4 d \mu_{\lambda}(x) d\mu_{\lambda}(y)\right)^{1/2} &= \lambda^2  \left(\int_{\mathbb{R}^2} \|x - y\|^4 d \mu(x) d\mu(y)\right)^{1/2}
\end{align*}
and therefore, for any $\lambda > 0$,
$$ J_{\sigma, \delta}(\mu_{\lambda}) \frac{1}{\lambda^4} = J_{\sigma, \delta  \lambda^{-2}}(\mu).$$
As
the number of points $n$ increases, $\delta$ decreases. This, however, does not fundamentally alter the functional, it merely changes the scale of
extremal configurations. We can thus without loss of generality assume that $\delta = 1$ and study the simplified functional
$J_{\sigma} := J_{\sigma, 1}.$

\subsection{Radial Solutions.} We will now analyze $J_{\sigma}$ for $\sigma$ fixed under the additional assumption that $\mu$ is radial. This is partially inspired by numerical results which seemed to result in radial configurations. It could be interesting to try to remove that assumption.
Assuming the measure $\mu$ to be radial, we will introduce $\nu$ as the measure on $\mathbb{R}_{\geq 0}$ such that for all $A \subset [0,\infty]$
$$ \nu(A) = \mu \left( \left\{x \in \mathbb{R}^2: \|x\| \in A \right\} \right).$$
This makes $\nu$ a probability measure on $[0,\infty]$.
We require the two basic integral identities
\begin{align*}
\frac{1}{2\pi r} \frac{1}{2\pi s} \int_{\|x\| = r} \int_{\|y\|=s} \|x-y\|^2 dx dy &= r^2 + s^2. \\
\frac{1}{2\pi r} \frac{1}{2\pi s} \int_{\|x\| = r} \int_{\|y\|=s} \|x-y\|^4 dx dy &= r^4 +4r^2 s^2 +  s^4.
\end{align*}
We then have, by switching to polar coordinates,
\begin{align*}
 \int_{\mathbb{R}^2} \int_{\mathbb{R}^2} \|x-y\|^4 d\mu(x) d\mu(y) &= \int_0^{\infty} \int_0^{\infty}  (r^4 +4r^2 s^2 +  s^4) d\nu(r) d\nu(s) \\
 &=  2 \int_0^{\infty} r^4 d\nu(r) + 4\left( \int_0^{\infty} r^2 d\nu(r) \right)^2.
 \end{align*}
Likewise, we have
\begin{align*}
 \int_{\mathbb{R}^2} \int_{\mathbb{R}^2}  \|x-y\|^2 d\mu(x) d\mu(y) &= \int_0^{\infty} \int_0^{\infty}  (r^2 +  s^2)  d\nu(r) d\nu(s) =  2 \int_0^{\infty} r^2 d\nu(r).
 \end{align*}
 Thus, for radial measures, the functional simplifies to (dividing without loss of generality by a factor of 2 for simplicity)
 \begin{align*}
J_{\sigma,1}(\nu) &=   \int_0^{\infty} r^4 d\nu(r)  +  \frac{\sigma}{\sqrt{p}}    \left(  \frac{1}{2} \int_0^{\infty} r^4 d\nu(r) + \left( \int_0^{\infty} r^2 d\nu(r) \right)^2 \right)^{1/2}.
\end{align*}
 At this point we can rewrite everything in terms of moments of a random variable $X$ that is distributed according to $X \sim \nu$ as
 $$ J_{\sigma,1}(\nu) = \mathbb{E} X^4  + \frac{\sigma}{\sqrt{p}}  \left(\frac{1}{2} \mathbb{E}X^4 + (\mathbb{E} X^2)^2\right)^{1/2}.$$
 We recall the Cauchy-Schwarz inequality
 $$0 \leq \mathbb{E} X^2 \leq \left(\mathbb{E} X^4\right)^{1/2}.$$
 We can thus reduce the problem to one in multivariable calculus: for all $0 \leq a \leq \sqrt{b}$, what can be said about the minimum of
 $$ f(a,b) = b  + c \cdot \sqrt{ \frac{b}{2} + a^2},$$
  where $c=\sigma \delta/\sqrt{p}$. Observe that
    $$ \frac{\partial f}{\partial b} = 1 + \frac{c}{4\sqrt{a^2 + b/2}}$$
which shows that for $c \geq 0$, the minimizer is given by the trivial solution where the entire mass is collected in a point $\nu = \delta_0$. Let us thus assume $c < 0$. Then 
  $$ \frac{\partial f}{\partial a} =  \frac{ac}{\sqrt{a^2 + b/2}} \leq 0$$
  and the functional decreases under increasing $a$. We thus want to have $a = \sqrt{b}$ which corresponds to the entire probability mass being collected in a single point. A simple computation shows that if $a = \sqrt{b}$, then the minimum of $f(\sqrt{b}, b)$ for $c<0$ is given by
  $ b_{*} = 3c^2/8$
  and thus the random variable is concentrated at distance $\sim \sqrt{|c|}$ form the origin. Recalling that we expect $\sigma \sim 1$, this corresponds to (for $\sigma < 0$) the functional $J_{\sigma, 1}$ assuming its minimum for a ring of radius $ \sim p^{-1/4}$.
  Recalling the scaling symmetry
 $ J_{\sigma, \delta}(\mu_{\lambda}) \lambda^{-4} = J_{\sigma, \delta  \lambda^{-2}}(\mu)$
   we thus the functional $ J_{\sigma, \delta}$ to assume its minimum for radius $\delta^{1/2} p^{-1/4}$.
Recalling $\delta = 1/n$, we arrive at the scaling of a ring forming at distance $\sim n^{-1/2} p^{-1/4}$. Finally, for a random $k-$regular graph, we have $k=p \cdot n$, this leads to $\sim k^{-1/4} n^{-1/4}$.

\section{Numerical Results}

We conclude with a discussion of some numerical experiments to test the assumptions on scaling that guided our derivation. Our underlying assumption is that there is simply no good way to embed a large random graph; the object is too high-dimensional. More precisely, we assumed that if we are given an embedding $\{y_1, \dots, y_n \} \subset \mathbb{R}^2$, then -- due to stochastic regularization -- the t-SNE energy $E$ will be essentially constant across all graphs sampled from a fixed model. We expect $E$ to be close to the expectation and that the typical deviation from the expectation is given by the variance. This section focuses on testing this hypothesis.

\begin{center}
\begin{figure}[h!]
\begin{tikzpicture}
\node at (-4,0) {\includegraphics[width=0.35\textwidth]{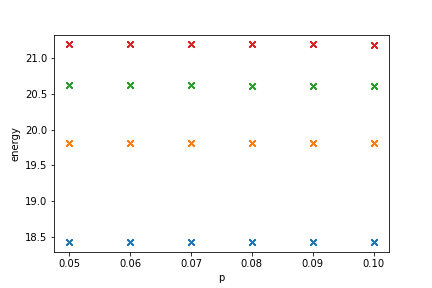}};
\node at (0,0) {\includegraphics[width=0.35\textwidth]{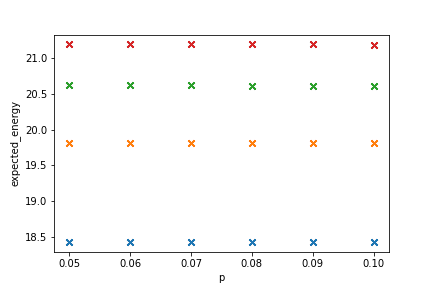}};
\node at (4,0) {\includegraphics[width=0.35\textwidth]{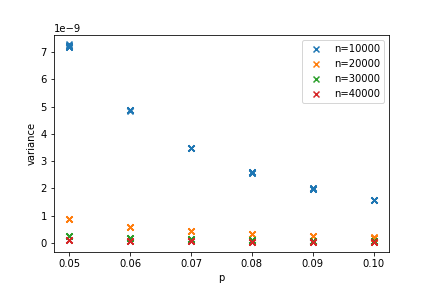}};
\end{tikzpicture}
\caption{Scatterplots of t-SNE energy (left), expected energy (center), and variance (right) of all trials of each parameter setting. We observe little variance between trials, which is unsurprising due to the stochastic regularity of the underlying graph model. The t-SNE energy is mostly explained by its expectation.}
\label{fig:stats}
\end{figure}
\end{center}

\subsection{Experiment Setup.}

We used the Networkx Python library to generate random regular graphs $G$, and we ran t-SNE using Kluger et. al's implementation \cite{george2} with our custom values for $P$. We normalized $P$ to a probability distribution, i.e. $p_{ij} \in \{0, 1/(2|E(G)|)\}$, though as discussed previously this has no effect on the minima of the t-SNE energy $E$. We ran t-SNE as follows: after a PCA initialization, we applied an early exaggeration factor of 12 for 250 iterations, and then finished with an additional 500 normal iterations. We checked that this was sufficient for the embedding to stabilize. With our chosen normalization, we calculated the t-SNE energy as
\begin{align*}
\mbox{t-SNE} &= \sum_{(i,j) \in E(G)} \frac{1}{|E(G)|}  \log \left( \sum_{i,j =1 \atop i \neq j}^n \frac{1}{1+ \|y_i - y_j\|^2} \right) \\
&+  \frac{1}{|E(G)|}\sum_{(i,j) \in E(G)} \log\left(1+\|y_i  - y_j\|^2\right) \\
\end{align*}
and the expectation and variance as
\begin{align*}
\mathbb{E} ~\mbox{t-SNE}&= \log \left( \sum_{i,j =1 \atop i \neq j}^n \frac{1}{1+ \|y_i - y_j\|^2} \right)  + \frac{p}{2 \cdot |E|} \sum_{i,j=1 \atop i \neq j}^{n}\log\left(1+\|y_i  - y_j\|^2\right) \\
\mathbb{V} ~\mbox{t-SNE} &= p(1-p) \sum_{i,j=1 \atop i \neq j}^n \frac{1}{4\cdot |E|^2}  \left(\log\left(1+\|y_i  - y_j\|^2\right)\right)^2.
\end{align*}
Since we work with random regular graphs, the variance comes from the second energy term only. We ran calculations for all graph parameter combinations with 
$$
n = 10\_000, 20\_000, 30\_000, 40\_000 \quad \text{and} \quad p = 0.05, 0.06, \ldots, 0.1.
$$
The graph degree is $k = n\cdot p$. We ran $10$ trials for each parameter setting.

\begin{center}
\begin{figure}[h!]
\begin{tikzpicture}
\node at (-5,0) {\includegraphics[width=0.3\textwidth]{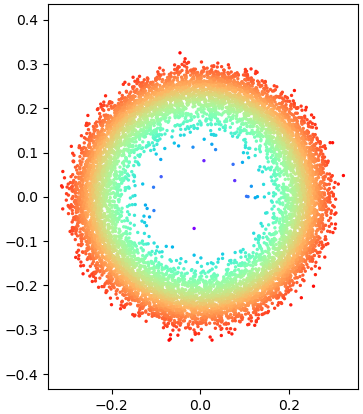}};
\node at (0,0) {\includegraphics[width=0.3\textwidth]{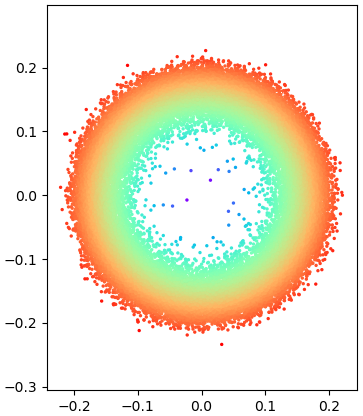}};
\end{tikzpicture}
\caption{t-SNE embedding of a random regular graph with $p=0.1$ and $n=10000$ (left) and $40000$ (right). The embedding diameter shrinks as $n$ increases due to the Jensen-like structure of the energy, which tends to concentrate the embedding at a point.}
\label{fig:ringn}
\end{figure}
\end{center}

\vspace{-20pt}

\subsection{Results.}

Our results confirm that the expected energy roughly equals the actual energy (Fig. \ref{fig:stats}). We also observe that in typical realizations, the expectation is many orders of magnitude larger than the variance. The calculations support our hypothesis that t-SNE works by minimizing the Jensen-like gap between the energy and its expected value, but injects some randomness to prevent the embedding from converging to a point mass. We observe how the Jensen structure tends to concentrate the embedding in Fig. \ref{fig:ringn}, which shows that the output diameter tends to decrease as $n$ increases.
We also hypothesize that the energy of the final embedding is described by:
$$
\mbox{t-SNE energy}(y_1, \dots, y_n) = \mathbb{E}(y_1, \dots, y_n)  + \sigma \sqrt{ \mathbb{V}(y_1, \dots, y_n)}
$$
where $\sigma$ takes values at scale $\sim 1$. We tested this conjectured relationship numerically by computing the actual energy, the expected energy and the variance and the solving for $\sigma$. The results are shown in Fig. \ref{fig:sigma} and suggest that this assumption is reasonable. Indeed, we emphasize that, due to the scaling by $2 \cdot |E|$, the expected energy is at scale $\sim 20$ while the variance is closer to $\sim 10^{-9}$. Having $\sigma \sim 1$ for these very different scales is a good indicator that our assumption on stochastic regularization is meaningful in this context. While it would be difficult to argue convincingly that $\sigma$ behaves like a Gaussian, it does seem as if it were roughly centered at 0 and has variance roughly $\sim 1$ (Fig. \ref{fig:sigma}).  

\vspace{-15pt}
\begin{center}
\begin{figure}[h!]
\begin{tikzpicture}
\node at (0,0) {\includegraphics[width=0.6\textwidth]{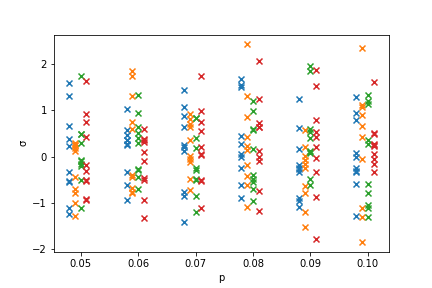}};
\node at (6,0) {\includegraphics[width=0.33\textwidth]{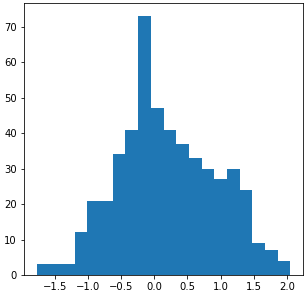}};
\end{tikzpicture}
\caption{Left: Scatterplots of $\sigma$ of all trials of each parameter setting. Right: Histogram of $\sigma$ for $n=10000$ and $p=0.1$.}
\label{fig:sigma}
\end{figure}
\end{center}

\vspace{-15pt}


\begin{thebibliography}{4}


\bibitem{alon} N. Alon and F. K. Chung, Explicit construction of linear sized tolerant networks,
Discrete Math.,72 (1989): p. 15--19.

\bibitem{arora} S. Arora, W. Hu, P. K. Kothari, An Analysis of the t-SNE Algorithm for Data Visualization, Proceedings of Machine Learning Research vol 75 (2018): p. 1--8.

\bibitem{bel} M. Belkin and P. Niyogi. Laplacian eigenmaps and spectral techniques for embedding
and clustering. In Advances in Neural Information Processing Systems (2002), pages 585--591.

\bibitem{blaschke} W. Blaschke. Eine isoperimetrische Eigenschaft des Kreises. Math. Z, 1 (1918): p. 52--57.

\bibitem{bohm} J. N. B\"ohm, P. Berens and D. Kobak, A Unifying Perspective on Neighbor Embeddings along the Attraction-Repulsion Spectrum, arXiv:2007.08902

\bibitem{thal} G. Bonnet, A. Gusakova, C. Th\"ale, D.  Zaporozhets, Sharp inequalities for the mean distance of random points in convex bodies, arXiv:2010.03351

\bibitem{burg} B. Burgstaller and F. Pillichshammer. The average distance between two points. Bull.
Aust. Math. Soc., 80 (2009): p. 353--359.

\bibitem{car} M. A Carreira-Perpinan. The elastic embedding algorithm for dimensionality reduction. In
International Conference on Machine Learning 10 (2010), pages 167--174.

\bibitem{coif} R.  Coifman and S. Lafon. Diffusion maps. Applied and Computational Harmonic
Analysis, 21 (2006): p. 5--30.

\bibitem{hint} G. E. Hinton and S. T. Roweis. Stochastic neighbor embedding. In Advances in Neural
Information Processing Systems 2003, p. 857--864.

\bibitem{jac} M. Jacomy, T. Venturini, S. Heymann, and M. Bastian. ForceAtlas2,
a continuous graph layout algorithm for handy network visualization designed for the Gephi
software. PloS One, 9(6), 2014.

\bibitem{kobak3} D. Kobak and G. C. Linderman, UMAP does not preserve global structure any better than t-SNE when using the same initialization, bioRxiv 2019.12.19.877522

\bibitem{kobak0} D. Kobak, G. Linderman, S. Steinerberger, Y. Kluger and P. Berens
Heavy-tailed kernels reveal a finer cluster structure in t-SNE visualisations, ECML PKDD 2019, W\"urzburg, Germany, September 16--20, 2019

\bibitem{kobak} D. Kobak and P. Berens. The art of using t-SNE for single-cell transcriptomics.
Nature Communications, 10:5416, 2019.

\bibitem{liao} J. Liao and A. Berg, Sharpening Jensen's inequality, The American Statistician 73 (2019), p. 278 -- 281.


\bibitem{george1} G. Linderman and S. Steinerberger. Clustering with t-SNE, provably. SIAM Journal
on Mathematics of Data Science, 1(2019): p. 313--332.


\bibitem{george2} G. Linderman, M. Rachh, J. G. Hoskins, S. Steinerberger, Y. Kluger,
Fast interpolation-based t-SNE for improved visualization of single-cell RNA-seq data. Nature
Meth. 16 (2019): 243.

\bibitem{umap} L. McInnes, J. Healy, and J. Melville. UMAP: Uniform manifold approximation
and projection for dimension reduction. arXiv:1802.03426, 2018.

\bibitem{pfiefer} R. Pfiefer. Maximum and minimum sets for some geometric mean values. J. Theoret.
Probab., 3 (1990): p. 169--179.

\bibitem{largevis}  J. Tang, J. Liu, M. Zhang, and Q. Mei, Visualizing large-scale and highdimensional data. In International Conference on World Wide Web 2016, pages 287--297.

\bibitem{laurens1} L. van der Maaten and G. Hinton, Visualizing data using t-SNE. Journal of Machine
Learning Research 9 (2008):2579--2605, 2008.

\bibitem{wang} Y. Wang, H. Huang, C. Rudin, Y. Shaposhnik, Understanding How Dimension Reduction Tools Work: An Empirical Approach to Deciphering t-SNE, UMAP, TriMAP, and PaCMAP for Data Visualization, arXiv:2012.04456

\bibitem{wattenberg} Martin Wattenberg, Fernanda Viegas, and Ian Johnson. How to use t-SNE effectively. Distill 1(2016):e2.



\end{thebibliography}
\end{document}